\pgfplotsset{compat=1.17} 
\def \extended {1}
\DeclareMathOperator*{\argmin}{arg\,min}
\theoremstyle{plain} 
\newtheorem{theorem}{Theorem}
\newtheorem{corollary}{Corollary}
\newtheorem{lemma}{Lemma}
\newtheorem{prop}{Proposition}
\theoremstyle{definition} \newtheorem{remark}{Remark}
\theoremstyle{definition} 
\newtheorem{assumption}{Assumption}
\crefname{assumption}{Assumption}{Assumptions}
\crefname{remark}{Remark}{Remarks}
\crefname{prop}{Proposition}{Propositions}
\crefname{theorem}{Theorem}{Theorems}
\crefname{corollary}{Corollary}{Corollaries}
\crefname{section}{Section}{Sections}
\crefname{appendix}{Appendix}{Appendices}
\crefname{equation}{Equation}{Equations}
\crefname{figure}{Figure}{Figures}
\crefname{lemma}{Lemma}{Lemmas}
\crefname{table}{Table}{Tables}
\newcommand{\map}[3]{#1:#2\rightarrow #3}
\newcommand{\snorm}[1]{\Vert #1 \Vert}
\title{Theoretical Guarantees of Data Augmented Last Layer Retraining Methods}
\author{%
  \IEEEauthorblockN{Monica Welfert, Nathan Stromberg and Lalitha Sankar}
  \IEEEauthorblockA{Arizona State University\\
                    Email: \{mwelfert, nstrombe, lsankar\}@asu.edu}
    \thanks{This work is supported in part by NSF grants CIF-1901243, CIF-1815361, CIF-2007688, DMS-2134256, and SCH-2205080.}
}
\def \extended {1} 
\begin{document}
\maketitle
% \begin{abstract} 
%  THIS PAPER IS ELIGIBLE FOR THE STUDENT PAPER AWARD. Ensuring fair predictions across many distinct subpopulations in the training data can be prohibitive for large models. Recently, simple linear last layer retraining (LLR) strategies, in combination with data augmentation methods such as upweighting, downsampling and mixup, have been shown to achieve state-of-the-art performance for worst-group accuracy (WGA), which quantifies accuracy for the least prevalent subpopulation. 
%  For linear LLR and the abovementioned augmentations, we present the optimal worst-group accuracy when modeling the distribution of the latent representations (input to the last layer) as Gaussian for each subpopulation. We evaluate and verify our results for both synthetic and large publicly available datasets.
% \end{abstract}

\begin{abstract} 
 Ensuring fair predictions across many distinct subpopulations in the training data can be prohibitive for large models. Recently, simple linear last layer retraining strategies, in combination with data augmentation methods such as upweighting, downsampling and mixup, have been shown to achieve state-of-the-art performance for worst-group accuracy, which quantifies accuracy for the least prevalent subpopulation. For linear last layer retraining and the abovementioned augmentations, we present the optimal worst-group accuracy when modeling the distribution of the latent representations (input to the last layer) as Gaussian for each subpopulation. We evaluate and verify our results for both synthetic and large publicly available datasets.
\end{abstract}

\section{Introduction}
Last layer retraining (LLR) has emerged as a popular method for leveraging representations from large pretrained neural networks and fine-tuning them to locally available data. These methods are significantly inexpensive computationally relative to training the full model, and thus, allow transferring a model to new domains, predicting on \textit{retraining data} with distributional shifts relative to the original, and optimizing for a different metric than that used by the original model. 

In general, training data includes samples from different subpopulations \cite{yang2023change}. %; further, the relative proportions of such demographics in datasets collected in different temporal or spatial settings can induce subpopulation shifts \cite{yang2023change}. %An example of distributional shift is one where the proportional of sub-populations in the target data differs significantly from the original data. 
Assuring fair inferences across all subpopulations remains an important problem in modern machine learning. A metric which has been recently evaluated with good success for assuring fair decisions is  worst-group accuracy (WGA), a worst-case metric for any prior across subpopulations. Existing methods which optimize for WGA utilize strongly regularized models along with \textit{data augmentation} methods such as \textit{downsampling} \cite{kirichenko2022last, labonte2023towards}, \textit{upweighting} \cite{liu2021just, qiu2023simple}, and \textit{mixing} \cite{yaoImprovingOutofDistributionRobustness2022, giannone2021just} (\cref{sec:setup} presents precise definitions of these methods). These {augmentation techniques} help to account for varying proportions of individual subpopulations and enable the final model to predict well on every subpopulation. %We remark that while downsampling and upsampling is not technically an augmentation, 

It is difficult to obtain theoretical performance guarantees for large models. However, for a fixed representation-extracting model, one can focus on evaluating LLR techniques that tune a linear last layer using (possibly augmented) representations from the pretrained model. %We present guarantees for linear models applied to the which could be learned in the representation space of a larger model. 
We study this setting and model the representations of the subpopulations using tractable distributions; this allows us to directly compare different data augmentation techniques in terms of worst group error ($\text{WGE}=1-\text{WGA}$) and finite sample performance. %(in the finite sample regime). 

To this end, analogous %We build upon the analysis in
%\noteLS{SPURIOUS CORRELATION SHOULD BE MENTIONED}
to~\cite{yaoImprovingOutofDistributionRobustness2022}, we model individual sub-populations as distinct Gaussian distributions. Our primary contribution is a straightforward comparison of the three most common data augmentation techniques for WGE: downsampling, upweighting, and mixing. We evaluate the performance of the abovementioned augmentation methods in three ways using: (i) the learned linear models, (ii) the resulting WGE, and (iii) the sample complexity in the setting of a finite number of training examples.
Our key contributions are in providing:
\begin{itemize}[leftmargin=*]
    \item A distribution-free equivalence of the risk minimization problem, and thus the optimal models and performance, for upweighting and downsampling (\cref{thm:ds-uw-eq}). To the best of our knowledge, this is a new result.
    \item Statistical analysis of the WGE for each data augmentation method under Gaussian subpopulations (\cref{thm:wge-comparison}).
    \item Sample complexity of each method (\cref{thm:sample_complexity}).
    \item Empirical results that match theory for Gaussian mixtures and the CMNIST, CelebA and Waterbirds datasets.
\end{itemize}
Our work is distinct from that in \cite{yaoImprovingOutofDistributionRobustness2022} as follows: (i) explicit incorporation of the minority group priors; (ii) providing precise WGE guarantees (in contrast to bounds in \cite{yaoImprovingOutofDistributionRobustness2022}); and (iii) including downsampling and upweighting as data augmentation methods (the focus in \cite{yaoImprovingOutofDistributionRobustness2022} is primarily on mixing and its variants) for which we also provide comparative model and error guarantees beyond the Gaussian setting.  
\section{Problem Setup}
\label{sec:setup}
We consider the supervised classification setting and assume that the LLR methods have access to a representation of the input/\textit{ambient} (original high-dimensional data such as images etc.) data, the ground-truth label, as well as the domain annotation. Taken together, the label and domain combine to define the group annotation for any sample. For ease of analysis, we assume binary labels (belonging to $\{0,1\}$) and binary domains (belonging to $\{S,T\}$). More formally, the training
dataset is a collection of i.i.d. tuples of the random variables $(X_a,Y,D) \sim P_{X_a,Y,D}$,  %We consider the classification of examples $(X,Y,D)$, 
where $X_a \in \mathcal{X}_a$ is the ambient high-dimensional sample, $Y \in \mathcal{Y} = \{0,1\}$ is the class label, and  $D \in \mathcal{D}=\{S,T\}$ is the domain label. Since the focus here is on learning the linear last layer, we denote the \textit{latent} representation that acts as an input to this last layer by $X\coloneqq \phi(X_a)$ for an embedding function $\map{\phi}{\mathcal{X}_a}{\mathcal{X}\subseteq\mathbb{R}^p}$ such that the training dataset for LLR is $(X,Y,D)\sim P_{X,Y,D}$. 

%We operate in the binary class and domain label setting, that is $\mathcal{Y} = \{0,1\}, \mathcal{D}=\{R,G\}$. 
The tuples $(Y,D)$ of class and domain labels partition the examples into four different groups. Let $\pi^{(y,d)}\coloneqq P(Y=y,D=d)$ for $(y,d) \in \mathcal{Y}\times\mathcal{D}$.
We denote the linear correction applied in the latent space of a pretrained model as $\map{f_\theta}{\mathcal{X}}{\mathbb{R}}$, which is parameterized by 
%which is a composition of $X$ and 
a linear decision boundary $\theta=(w,b)\in\mathbb{R}^{p+1}$ given by % $f_\theta$ is 
\begin{equation}
    f_\theta(x) = w^Tx + b.
\end{equation}
% where $\map{\sigma}{\mathbb{R}}{(0,1)}$ is the link function (e.g. the sigmoid).
The statistically optimal linear model is obtained by minimizing the risk defined as
\begin{equation}
    R(f_\theta) \coloneqq \mathbb{E}_{P_{X,Y,D}}[\ell(f_\theta(X),Y)],
    \label{eq:risk}
\end{equation}
where $\ell:\mathbb R\times\mathcal{Y} \to \mathbb R_+$ is a loss function.
We consider four different methods to learn a classifier: (a) {standard risk minimization} (SRM), (b) downsampling (DS), (c) upweighting (UW), and (d) intra-class domain mixup (MU). In particular, SRM involves minimizing \eqref{eq:risk} as is whereas downsampling involves reducing the size of each group to that of the smallest one while upweighting involves scaling the loss for each group in proportion to the inverse of the prior.  Finally, intra-class domain mixup takes an arbitrary convex combination of two randomly sampled representations from the same class but from different domains. 

A general formulation for obtaining the optimal $f_{\theta^*}$ is: 
\begin{equation}
    \theta^* = \argmin_\theta \mathbb{E}_{P_{X,Y,D}}[\ell(f_\theta(X),Y)c(Y,D)], \; 
    \label{eq:gen-opt}
\end{equation}
where $c(y,d)=1$, $(y,d)\in\mathcal{Y}\times\mathcal{D}$, for SRM, DS, and MU, but $c(y,d) = 1/({4\pi^{(y,d)}})$ for UW. Moreover, the priors on the groups remain the same as the true statistics, and therefore SRM, for all methods except DS where $\pi^{(y,d)} = {1}/{4}$. Finally, for MU, the representation $X$ is now $X=\Lambda X_1 + (1-\Lambda)X_2$ where $X_1 \sim P_{X|Y=y,D=S}$, $X_2 \sim P_{X|Y=y,D=T}$, $y \in \mathcal{Y}$, and the mixup parameter $\Lambda \sim \text{Beta}(\alpha,\alpha)$.

% The models for each are learned as follows:
% \begin{align}
%     &\text{SRM:} \quad \min_\theta R(f), 
%     \label{eq:srm} \\
%     &\text{DS:} \quad \min_\theta R(f), \; \text{ where } \pi^{(y,d)} = \frac{1}{4},
%     \label{eq:ds}
% \end{align}
% which solves $\min_\theta R(f)$, downsampling (DS), which solves 
% \begin{equation}
%     \min_\theta R(f), \; \text{ where } \pi^{(y,d)} = \frac{1}{4},
%     \label{eq:ds}
% \end{equation}
% for $(y,d) \in \mathcal{Y}\times\mathcal{D},$ denoting $c(y,d) = \frac{1}{4\pi^{(y,d)}}$, 
% \begin{equation}
%     \text{UW} \quad \min_\theta \mathbb{E}_{P_{X,Y,D}}[\ell(f(X),Y)c(Y,D)], \; 
%     \label{eq:uw}
% \end{equation}
% for $(y,d) \in \mathcal{Y}\times\mathcal{D},$ and intra-class domain mixup (MU), which solves
% \begin{equation}
%     \min_\theta R(f), \; \text{ where } X = \Lambda X_1 + (1-\Lambda) X_2,
%     \label{eq:mu}
% \end{equation}

We desire a model that makes fair decisions across groups, and therefore, we
evaluate worst-group error, i.e., the maximum error among all groups, defined for a model $f_\theta$ as
\begin{equation}
    \text{WGE}(f_\theta) \coloneqq \max_{(y,d)\in\mathcal{Y}\times\mathcal{D}} E^{(y,d)}(f_\theta),
    \label{eq:wge}
\end{equation}
where $E^{(y,d)}(f_\theta)$ denotes the per-group misclassification error for $(y,d) \in \mathcal{Y}\times\mathcal{D}$. Specifically, for $(y,d) \in \mathcal{Y}\times\mathcal{D}$:
\begin{equation}
    E^{(y,d)}(f_\theta) \coloneqq P(\mathbbm{1}\{f_\theta(X) > 1/2\} \ne Y | Y=y,D=d)
    \label{eq:group-error}
\end{equation}
where the threshold 1/2 is chosen to match $Y\in\{0,1\}$. 

\section{Main Results}\label{section:mainresults}
Our first result observes that, for any chosen loss, UW and DS yield the same statistically expected predictor. We collate the proofs in the Appendix and outline a proof sketch here. 
\begin{theorem}
    For any given $P_{X,Y,D}$ and loss $\ell$, the objectives in \eqref{eq:gen-opt} when modified appropriately for DS and UW are the same. Therefore, if a minimizer exists for one of them, then the minimizer of the other is the same, i.e., $\theta^*_\text{DS}=\theta^*_\text{UW}$.
    \label{thm:ds-uw-eq}
\end{theorem}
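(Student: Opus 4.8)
The plan is to write each objective explicitly as a weighted sum over the four groups via the law of total expectation, and then observe that the two resulting functionals of $\theta$ are term-by-term identical. The key structural observation is that, for any reweighting function $c$ and any assignment of group priors $\{\bar\pi^{(y,d)}\}$, the objective in \eqref{eq:gen-opt} can be written as
\begin{equation}
    \sum_{(y,d)\in\mathcal{Y}\times\mathcal{D}} \bar\pi^{(y,d)}\, c(y,d)\, \mathbb{E}_{P_{X|Y=y,D=d}}\!\left[\ell(f_\theta(X),y)\right],
    \label{eq:decomp-sketch}
\end{equation}
because neither $c$ nor the choice of group priors changes the within-group conditional law $P_{X|Y=y,D=d}$ of the latent representation: downsampling discards samples uniformly at random within each group, so it alters only the mixing weights, not the conditionals.

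Next I would plug in the two parameterizations. For DS we have $c(y,d)=1$ and $\bar\pi^{(y,d)}=1/4$, so the coefficient multiplying each conditional expectation in \eqref{eq:decomp-sketch} is $1/4$. For UW we have the true priors $\bar\pi^{(y,d)}=\pi^{(y,d)}$ together with $c(y,d)=1/(4\pi^{(y,d)})$, so the coefficient is $\pi^{(y,d)}\cdot 1/(4\pi^{(y,d)})=1/4$ --- the true prior cancels exactly against the inverse-prior weight. Hence both objectives equal $\tfrac{1}{4}\sum_{(y,d)}\mathbb{E}_{P_{X|Y=y,D=d}}[\ell(f_\theta(X),y)]$ as functions of $\theta$; since they are the same function, any minimizer of one is a minimizer of the other, and in particular $\theta^*_\text{DS}=\theta^*_\text{UW}$.

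The only hypothesis the argument uses is that every group has positive probability, $\pi^{(y,d)}>0$, which is needed for the UW weights to be well defined (and, equivalently, for downsampling to be non-degenerate). The single place where care is warranted is the claim that downsampling leaves each conditional $P_{X|Y=y,D=d}$ unchanged and only rescales the group priors; once that is in hand, the equivalence is an exact algebraic cancellation requiring no assumptions on $P_{X,Y,D}$ or on $\ell$, which is exactly what makes it distribution-free. Nothing further about existence or regularity is needed, since the two objectives are literally the same real-valued function of $\theta$, so existence, uniqueness, and the optimal value transfer verbatim between them.
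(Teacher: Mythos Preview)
Your proof is correct and essentially identical to the paper's: both decompose the objective via the law of total expectation into a sum over groups and observe that the per-group coefficients collapse to $1/4$ in each case, yielding literally the same functional of $\theta$. The only additions you make are the explicit remark that $\pi^{(y,d)}>0$ is needed and that downsampling preserves the conditionals $P_{X|Y=y,D=d}$, both of which the paper leaves implicit.
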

\begin{proof}[Proof sketch]\let\qed\relax The key intuition here is that the upweighting factor is proportional to the inverse of the priors on each group. Thus, when the expected loss is decomposed into an expectation over groups, the priors from the expected loss cancel and we recover the downsampled problem. A detailed proof can be found in 
% \cref{appendix:thm1-proof}.
% \begin{itemize}[leftmargin=*]
%     \item  Proof details are in Appendix~\ref{appendix:thm1-proof}.
% \end{itemize}
\if \extended 0%
~\cite[Appendix~A]{KurriWSS21}.
\fi%
\if \extended 1%
\cref{appendix:thm1-proof}.
\fi%
\end{proof}
\begin{remark}
    Although we are focused on the binary class and domain label setting, \cref{thm:ds-uw-eq} holds for any number of classes and domains by replacing $\pi^{(y,d)}=1/n_g$ and $c(y,d)=1/(n_g\pi^{(y,d)})$ for $(y,d) \in \mathcal{Y}\times\mathcal{D}$, where $n_g$ is the number of groups. To the best of our knowledge, such an analysis, albeit simple, has not been presented before. 
\end{remark}
While \cref{thm:ds-uw-eq} holds for any general data distribution, to obtain more refined guarantees on WGE and model parameters for different augmentation methods considered here, we make the following tractable assumptions on the dataset. Such assumptions have recently been introduced for tractability in the analysis of out-of-distribution robustness  (e.g., \cite{yaoImprovingOutofDistributionRobustness2022}). 
\begin{figure}
    \centering
    \includegraphics[width=0.95\linewidth]{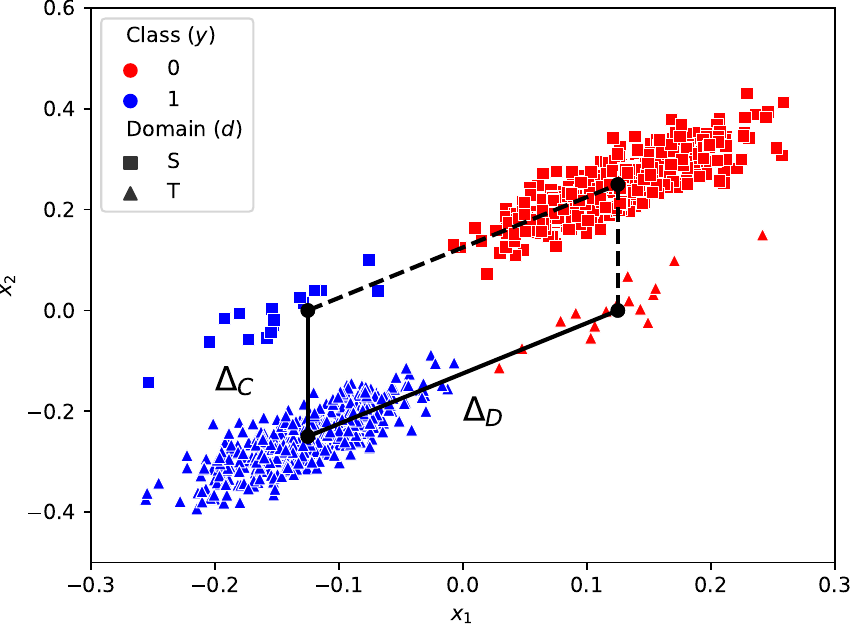}
    \caption{$\Delta_C$ and $\Delta_D$ are shown as line segments between group means overlaid on data sampled from Gaussian mixtures satisfying \cref{as:latent_normal,as:equal_priors,as:mean_difference,as:orthogonality}.}
    \label{fig:data}
\end{figure}
\begin{assumption}
    \label{as:latent_normal} $X \in \mathcal{X}$ is distributed according to the following mixture of Gaussians: 
    \begin{align}
        X | (Y=y, D=d) &\sim \mathcal{N}(\mu^{(y,d)}, \Sigma),
        \label{eq:gauss-model}
    \end{align}
    for $(y,d) \in \mathcal{Y}\times\mathcal{D}$, where $\mu^{(y,d)}\coloneqq \mathbb E[X|Y=y,D=d] \in \mathbb R^p$ and $\Sigma \in \mathbb R^{p\times p}$ is symmetric positive definite. Additionally, we place priors $\pi^{(y,d)}$, $(y,d) \in \mathcal{Y}\times\mathcal{D}$, on each group and priors $\pi^{(y)}\coloneqq P(Y=y)$, $y \in \mathcal{Y}$, on each class. 
\end{assumption}
\begin{assumption}
    \label{as:equal_priors} The minority groups have equal priors, i.e., for $\pi_0\le\frac{1}{4}$, 
    \begin{align*}
        \pi^{(0,T)} = \pi^{(1,S)} = \pi_0 \quad \text{and} \quad
        \pi^{(1,T)} = \pi^{(0,S)} = 1/2-\pi_0.
    \end{align*}
    % \begin{align*}
    %     \pi^{(0,T)} = \pi^{(1,S)} &= \pi_0,\\
    %     \pi^{(1,T)} = \pi^{(0,S)} &= \frac{1}{2}-\pi_0,
    % \end{align*}
    Also, the class priors are equal, i.e., $\pi^{(0)} = \pi^{(1)} = 1/2$.
\end{assumption}
\begin{assumption}
    \label{as:mean_difference} The difference in means between classes within a domain $\Delta_D \coloneqq \mu^{(1,d)} - \mu^{(0,d)}$ is constant for $d\in\mathcal{D}$.
\end{assumption}
\begin{remark}
     \cref{as:mean_difference} also implies that the difference in means between domains within the same class $\Delta_C \coloneqq \mu^{(y,S)} - \mu^{(y,T)}$ is also constant for each $y\in\mathcal{Y}$. We see this by noting that each group mean makes up the vertex of a parallelogram, as shown in \cref{fig:data}, where $\Delta_D$ and $\Delta_C$ are shown on samples drawn from a distribution satisfying \cref{as:equal_priors,as:latent_normal,as:mean_difference}.
\end{remark}

\begin{prop}
Let $\ell(\hat{y},y) = \|y-\hat{y}\|_2^2$ for $\hat{y} \in \mathbb R$ and $y \in \mathcal{Y} $ be the mean-squared error (MSE) loss. Under \cref{as:latent_normal,as:equal_priors,as:mean_difference}, the minimizers in \eqref{eq:gen-opt} for DS and UW are the same, i.e.,
\[\theta^*_\text{DS} = \theta^*_\text{UW},\]
%where
\begin{equation}
    w^*_\text{DS} = w^*_\text{UW} = \frac{1}{4}\left(\Sigma + \frac{1}{4}\Delta_C \Delta_C^T + \frac{1}{4}\Delta_D \Delta_D^T\right)^{-1}\Delta_D
    \label{eq:opt-w-ds-1}
\end{equation}
%and
\begin{equation}
 b^*_\text{DS} = b^*_\text{UW} =\frac{1}{2} - \frac{1}{2}(w^*_\text{DS})^T\left(\mu^{(0,T)}+\mu^{(1,S)}\right),
 \label{eq:opt-b-ds-1}
\end{equation}
and thus, $
    \text{WGE}(f_{\theta^*_\text{DS}}) = \text{WGE}(f_{\theta^*_\text{UW}})$. 
    % \\&= \max\Bigg\{\Phi\left(\frac{-\frac{1}{2}(w_\text{DS}^*)^T\left(\Delta_C+\Delta_D\right)}{\sqrt{(w_\text{DS}^*)^T\Sigma w_\text{DS}^*}} \right), \\
    % & \qquad \qquad \quad \Phi\left(\frac{\frac{1}{2}(w_\text{DS}^*)^T\left(\Delta_C-\Delta_D\right)}{\sqrt{(w_\text{DS}^*)^T\Sigma w_\text{DS}^*}} \right) \Bigg\}.
% \[\text{WGE}(f_{\theta^*_\text{DS}}) = \text{WGE}(f_{\theta^*_\text{UW}}) = \max\left\{\Phi\left(\frac{-\frac{1}{2}(w_\text{DS}^*)^T\left(\Delta_C+\Delta_D\right)}{\sqrt{(w_\text{DS}^*)^T\Sigma w_\text{DS}^*}} \right), \Phi\left(\frac{\frac{1}{2}(w_\text{DS}^*)^T\left(\Delta_C-\Delta_D\right)}{\sqrt{(w_\text{DS}^*)^T\Sigma w_\text{DS}^*}} \right) \right\}.\]
\label{prop:ds-uw-weights-eq}
\end{prop}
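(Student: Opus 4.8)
Rather than optimizing the DS and UW objectives separately, the plan is to first invoke \cref{thm:ds-uw-eq}, which already gives $\theta^*_\text{DS}=\theta^*_\text{UW}$, so that it suffices to minimize the single objective $\mathbb E[(w^TX+b-Y)^2]$ with $(Y,D)$ uniform over the four groups (the DS law). This is a strictly convex quadratic in $(w,b)$: up to a factor $2$ its Hessian is the block matrix with blocks $\mathbb E[XX^T]$, $\mathbb E[X]$, $\mathbb E[X]^T$, $1$, whose Schur complement is $\operatorname{Cov}(X)\succeq\Sigma\succ 0$ by \cref{as:latent_normal}. Hence a unique minimizer exists and is characterized by the normal equations
\[
w^*=\operatorname{Cov}(X)^{-1}\operatorname{Cov}(X,Y),\qquad b^*=\mathbb E[Y]-(w^*)^T\mathbb E[X],
\]
with all moments taken under the downsampled (uniform-group) law. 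So the whole proposition reduces to evaluating these three moments in closed form and recognizing them.

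The moment computations use \cref{as:equal_priors,as:mean_difference} together with the parallelogram remark. Writing $\bar\mu\coloneqq\tfrac14\sum_{(y,d)}\mu^{(y,d)}$, one gets immediately $\mathbb E[Y]=\tfrac12$ and $\mathbb E[X]=\bar\mu$. For the covariance of $X$ I would apply the law of total covariance, $\operatorname{Cov}(X)=\mathbb E[\operatorname{Cov}(X\mid Y,D)]+\operatorname{Cov}(\mathbb E[X\mid Y,D])=\Sigma+\operatorname{Cov}(\mu^{(Y,D)})$. Since the four group means form a parallelogram, after centering they are exactly $\pm\tfrac12\Delta_C\pm\tfrac12\Delta_D$ over all four sign patterns, so the cross terms in $\tfrac14\sum(\mu^{(y,d)}-\bar\mu)(\mu^{(y,d)}-\bar\mu)^T$ cancel and $\operatorname{Cov}(\mu^{(Y,D)})=\tfrac14\Delta_C\Delta_C^T+\tfrac14\Delta_D\Delta_D^T$, which is precisely the matrix inverted in \eqref{eq:opt-w-ds-1}. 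For the cross-covariance, only the groups with $Y=1$ contribute, so $\operatorname{Cov}(X,Y)=\tfrac14(\mu^{(1,S)}+\mu^{(1,T)})-\tfrac12\bar\mu=\tfrac14\bigl(\mu^{(1,S)}+\mu^{(1,T)}-2\bar\mu\bigr)=\tfrac14\Delta_D$, the last equality again from the parallelogram parametrization. Substituting into the normal equations yields \eqref{eq:opt-w-ds-1}.

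For the intercept, the only extra ingredient is that the diagonals of a parallelogram bisect each other, i.e.\ $\bar\mu=\tfrac12(\mu^{(0,T)}+\mu^{(1,S)})$ (also equal to $\tfrac12(\mu^{(0,S)}+\mu^{(1,T)})$); plugging this into $b^*=\tfrac12-(w^*)^T\bar\mu$ gives \eqref{eq:opt-b-ds-1}. Finally $\text{WGE}(f_{\theta^*_\text{DS}})=\text{WGE}(f_{\theta^*_\text{UW}})$ is immediate from $\theta^*_\text{DS}=\theta^*_\text{UW}$ and the definition \eqref{eq:wge}. I do not expect a real obstacle: the argument is essentially the closed form of ordinary least squares plus elementary parallelogram geometry. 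The only points needing (mild) care are checking positive definiteness of $\operatorname{Cov}(X)$ so the normal equations genuinely identify the minimizer, and tracking the sign patterns in the covariance-of-means sum so the cross terms cancel cleanly.
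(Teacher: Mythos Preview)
Your proposal is correct and follows the same high-level plan as the paper: invoke \cref{thm:ds-uw-eq} for $\theta^*_\text{DS}=\theta^*_\text{UW}$, then compute the explicit DS parameters via the normal equations for least squares. The execution, however, is noticeably different. You work once under the DS (uniform-group) law and exploit the parallelogram geometry directly---after centering, the four group means are $\pm\tfrac12\Delta_C\pm\tfrac12\Delta_D$, so $\operatorname{Cov}(X)$ and $\operatorname{Cov}(X,Y)$ drop out in two lines, and the diagonal-bisection identity $\bar\mu=\tfrac12(\mu^{(0,T)}+\mu^{(1,S)})$ gives the intercept. The paper instead performs two separate derivations: for UW it computes all the weighted moments $\mathbb E[c(Y,D)],\mathbb E[Xc(Y,D)],\mathbb E[XX^Tc(Y,D)],\ldots$ by brute force, and for DS it first derives the SRM solution for \emph{general} $\pi_0$ via the law of total variance and a lemma relating $\bar\Delta,\Delta_C,\Delta_D$, then specializes to $\pi_0=1/4$. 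Your route is shorter and more transparent for the proposition itself; the paper's longer route has the payoff that the general-$\pi_0$ SRM formula and the explicit Gaussian-CDF expressions for the group errors are established here and then reused verbatim in the proof of \cref{thm:wge-comparison}.
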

\begin{proof}[Proof sketch]\let\qed\relax
The proof of parameter equality follows directly from \cref{thm:ds-uw-eq}. To obtain the specific forms of the parameters, we derive the optimal parameters in \eqref{eq:gen-opt} for the given $\ell$ and appropriate values of $c(y,d)$, $(y,d) \in \{0,1\} \times \{S,T\}$, for DS and UW. We then use \cref{as:latent_normal} to obtain the WGE in terms of Gaussian CDFs (as detailed in \cref{appendix:ds-uw-weights-eq}).
\end{proof}

Note that if $\ell(\hat{y},y) = \|y-\hat{y}\|_2^2 + \lambda \|w\|_1$ for $\hat{y} \in \mathbb R$, $y \in \mathcal{Y}$, and a regularizer $\lambda >0$, then \eqref{eq:gen-opt} simplifies to the deep feature reweighting (DFR) optimization, an $\ell_1$-regularized DS method that achieves state of the art WGA for many datasets \cite{kirichenko2022last}.
\begin{corollary}
Let $\ell(\hat{y},y) = \|y-\hat{y}\|_2^2 + \lambda \|w\|_1$ for $\hat{y} \in \mathbb R$, $y \in \mathcal{Y}$, and $\lambda >0$. Under \cref{as:latent_normal,as:equal_priors,as:mean_difference},the minimizer in \eqref{eq:gen-opt} for DFR is the same as that for UW, i.e.,
\[\theta^*_\text{DFR} = \theta^*_\text{UW}.\]
Thus,
\[\text{WGE}(f_{\theta^*_\text{DFR}}) = \text{WGE}(f_{\theta^*_\text{UW}}).\]
\label{cor:dfr-uw-weights-eq}
\vspace{-20pt}
\end{corollary}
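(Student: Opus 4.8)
The plan is to derive the corollary directly from \cref{thm:ds-uw-eq}, treating the $\ell_1$ penalty as part of the loss. First I would note that the DFR objective is precisely the DS instance of \eqref{eq:gen-opt} with the regularized loss $\ell_\lambda(\hat y,y)\coloneqq\|y-\hat y\|_2^2+\lambda\|w\|_1$, and the ``UW'' objective referred to in the corollary is the UW instance of \eqref{eq:gen-opt} with the \emph{same} $\ell_\lambda$. Since \cref{thm:ds-uw-eq} is proved for an arbitrary loss, it then suffices to confirm that its prior-cancellation argument is not disturbed by the extra additive term, and that a minimizer exists so the ``if a minimizer exists'' hypothesis of \cref{thm:ds-uw-eq} is met.

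For the first point I would rewrite the objective of \eqref{eq:gen-opt} under $\ell_\lambda$ as $\mathbb{E}_{P_{X,Y,D}}[\|Y-f_\theta(X)\|_2^2\,c(Y,D)]+\lambda\|w\|_1\,\mathbb{E}_{P_{X,Y,D}}[c(Y,D)]$. For DS we have $c\equiv 1$ with $\pi^{(y,d)}=1/4$, and for UW we have $c(y,d)=1/(4\pi^{(y,d)})$ with the true priors; in either case $\mathbb{E}_{P_{X,Y,D}}[c(Y,D)]=\sum_{(y,d)}\pi^{(y,d)}c(y,d)=1$, so the penalty contributes the identical quantity $\lambda\|w\|_1$ to both objectives (and, incidentally, it then does not matter whether one places the regularizer inside or outside the reweighted expectation). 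The squared-error terms coincide by \cref{thm:ds-uw-eq}, so the DS and UW objectives under $\ell_\lambda$ are the same function of $\theta$.

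For the second point I would invoke \cref{as:latent_normal}: since $X$ then has a density and $\Sigma$ is positive definite, the squared-error term of \eqref{eq:gen-opt} is a strictly convex, coercive quadratic in $\theta=(w,b)$, and adding the convex penalty $\lambda\|w\|_1\ge 0$ keeps the objective strongly convex and coercive, so it attains a unique minimizer on $\mathbb{R}^{p+1}$ (in fact \cref{as:latent_normal} alone suffices here; \cref{as:equal_priors,as:mean_difference} are needed only for the explicit parameter forms in \cref{prop:ds-uw-weights-eq}). Then \cref{thm:ds-uw-eq} yields $\theta^*_\text{DFR}=\theta^*_\text{UW}$, and since $\text{WGE}(f_\theta)$ depends on the classifier only through $\theta$ via the definitions \eqref{eq:group-error} and \eqref{eq:wge}, we conclude $\text{WGE}(f_{\theta^*_\text{DFR}})=\text{WGE}(f_{\theta^*_\text{UW}})$. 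I do not anticipate a real obstacle: the only step that needs attention is the bookkeeping identity $\mathbb{E}_{P_{X,Y,D}}[c(Y,D)]=1$, which is exactly what makes the $\ell_1$ term survive the prior cancellation identically on the DS and UW sides; everything else is a direct application of \cref{thm:ds-uw-eq} together with the strong convexity supplied by the Gaussian model.
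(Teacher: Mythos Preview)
Your proposal is correct and mirrors the paper's own argument: the paper simply states that the corollary ``follows from \cref{thm:ds-uw-eq} and \cref{prop:ds-uw-weights-eq}'', and your write-up unpacks exactly this, applying \cref{thm:ds-uw-eq} with the regularized loss and checking that the additive $\lambda\|w\|_1$ term survives identically on both sides via $\mathbb{E}[c(Y,D)]=1$. Your additional verification of existence/uniqueness via strong convexity under \cref{as:latent_normal} is a nice piece of rigor the paper leaves implicit.
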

The proof of \cref{cor:dfr-uw-weights-eq} follows from \cref{thm:ds-uw-eq} and \cref{prop:ds-uw-weights-eq}.

As derived in the proof of \cref{prop:ds-uw-weights-eq}, the WGEs result from computing Gaussian CDFs at the optimal model for each method. However, while \cref{prop:ds-uw-weights-eq} clarifies the statistical behavior of DS and UW, comparing the resulting analytical expressions for WGEs for each of the four methods requires finer assumptions. To this end, we make the following orthogonality assumption.

\begin{assumption}
    $\Delta_D$ and $\Delta_C$ are orthogonal w.r.t. the $\Sigma^{-1}$--inner product, i.e., $\Delta_C^T \Sigma^{-1} \Delta_D = 0$.
    \label{as:orthogonality}
\end{assumption}

\begin{theorem}
Let $\ell(\hat{y},y) = \|y-\hat{y}\|_2^2$, $\hat{y} \in \mathbb R$, $y \in \mathcal{Y} $. For \cref{as:latent_normal,as:equal_priors,as:mean_difference}, the optimal SRM and MU models are: 
\begin{equation}
    w^*_\text{SRM} = \frac{1}{4}\left(\Sigma + 2\pi_0(1-2\pi_0)\Delta_C\Delta_C^T + \frac{1}{4}\bar\Delta\bar\Delta^T \right)^{-1}\bar\Delta,
    \label{eq:opt-w-srm-1}
\end{equation}
where $\bar\Delta \coloneqq \mu^{(1)}-\mu^{(0)} = \Delta_D - (1-4\pi_0)\Delta_C$, and
\begin{equation}
    b^*_\text{SRM} = \frac{1}{2} - \frac{1}{2}(w^*_\text{SRM})^T(\mu^{(0,T)}+\mu^{(1,S)}),
    \label{eq:opt-b-srm-1}
\end{equation}
%and
\begin{equation}
    w^*_\text{MU} = \frac{1}{4}\left(2\mathbb E[\Lambda^2]\Sigma + \text{Var}(\Lambda)\Delta_C\Delta_C^T + \frac{1}{4}\Delta_D\Delta_D^T \right)^{-1}\Delta_D,
    \label{eq:opt-w-mu-1}
\end{equation}
%and
\begin{equation}
    b^*_\text{MU} = \frac{1}{2} - \frac{1}{2}(w^*_\text{MU})^T(\mu^{(0,T)}+\mu^{(1,S)}).
    \label{eq:opt-b-mu-1}
\end{equation}
Additionally, under \cref{as:orthogonality} and for $\pi_0 < 1/4$,
\begin{align*}
        \text{WGE}(f_{\theta^*_\text{SRM}}) > \text{WGE}(f_{\theta^*_\text{DS}}) = \text{WGE}(f_{\theta^*_\text{UW}}) = \text{WGE}(f_{\theta^*_\text{MU}}).
        % \label{eq:wge-comp}
    \end{align*}
\label{thm:wge-comparison}
\vspace{-20pt}
\end{theorem}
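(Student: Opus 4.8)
The plan is to handle the two halves of the statement in turn: first the closed forms of $\theta^*_\text{SRM}$ and $\theta^*_\text{MU}$, then the $\mathrm{WGE}$ chain.

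\emph{Step 1 (optimal models).} For the MSE loss, \eqref{eq:gen-opt} is a $c$-weighted quadratic in $(w,b)$, strictly convex since $\Sigma\succ0$ forces the (re)weighted second-moment matrix of $(X,1)$ to be positive definite; its unique minimizer solves the normal equations, i.e. $w^*=\mathrm{Cov}_c(X)^{-1}\mathrm{Cov}_c(X,Y)$ and $b^*=\mathbb E_c[Y]-w^{*T}\mathbb E_c[X]$, where $\mathbb E_c,\mathrm{Cov}_c$ are moments under the $c$-reweighted law. So only first and second moments of $X$ are needed. Using \cref{as:mean_difference} I write $X=m+U\Delta_D+V\Delta_C+Z$ with $U=\mathbbm{1}\{Y=1\}$, $V=\mathbbm{1}\{D=S\}$, $Z\sim\mathcal N(0,\Sigma)$ independent; then \cref{as:equal_priors} gives $\mathrm{Var}(U)=\mathrm{Var}(V)=\tfrac14$, $\mathrm{Cov}(U,V)=\pi_0-\tfrac14$, whence $\mathrm{Cov}(X,Y)=\tfrac14\bar\Delta$ and $\mathrm{Cov}(X)=\Sigma+\tfrac14\Delta_D\Delta_D^T+\tfrac14\Delta_C\Delta_C^T+(\pi_0-\tfrac14)(\Delta_D\Delta_C^T+\Delta_C\Delta_D^T)$; an elementary identity rewrites the latter as $\Sigma+2\pi_0(1-2\pi_0)\Delta_C\Delta_C^T+\tfrac14\bar\Delta\bar\Delta^T$, giving \eqref{eq:opt-w-srm-1}. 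For MU, conditioning on $(Y,\Lambda)$ gives $X=m+U\Delta_D+\Lambda\Delta_C+Z'$ with $\mathrm{Cov}(Z'\mid Y,\Lambda)=(\Lambda^2+(1-\Lambda)^2)\Sigma$ and $\Lambda$ independent of $U$; since $\mathrm{Beta}(\alpha,\alpha)$ is symmetric about $\tfrac12$, $\mathbb E\Lambda=\tfrac12$, and the law of total covariance yields \eqref{eq:opt-w-mu-1}. In every case $\mathbb E[X]=\tfrac12(\mu^{(0,T)}+\mu^{(1,S)})$, the parallelogram center, and $\mathbb E[Y]=\tfrac12$, which is \eqref{eq:opt-b-srm-1}–\eqref{eq:opt-b-mu-1}; running the same computation with $c(y,d)=1/(4\pi^{(y,d)})$ (UW) and with $\pi^{(y,d)}=1/4$ (DS) recovers \cref{prop:ds-uw-weights-eq}.

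\emph{Step 2 ($\mathrm{WGE}$ as a function of $w$).} Under \cref{as:latent_normal}, for $\theta=(w,b)$ the per-group error \eqref{eq:group-error} is a Gaussian tail, $E^{(y,d)}(f_\theta)=\Phi\!\big(-m^{(y,d)}/\sqrt{w^T\Sigma w}\big)$, with signed margins $m^{(1,d)}=w^T\mu^{(1,d)}+b-\tfrac12$ and $m^{(0,d)}=\tfrac12-b-w^T\mu^{(0,d)}$. All four optimal models share $b^*=\tfrac12-w^{*T}c_0$, $c_0\coloneqq\tfrac12(\mu^{(0,T)}+\mu^{(1,S)})$; substituting and using the parallelogram relations $\mu^{(1,S)}-c_0=c_0-\mu^{(0,T)}=\tfrac12(\Delta_D+\Delta_C)$ and $\mu^{(1,T)}-c_0=c_0-\mu^{(0,S)}=\tfrac12(\Delta_D-\Delta_C)$ collapses the four margins to $\tfrac12 w^T(\Delta_D\pm\Delta_C)$, so $\mathrm{WGE}(f_\theta)=\Phi\!\big(-\rho(w)\big)$ with $\rho(w)=\big(\tfrac12 w^T\Delta_D-\tfrac12|w^T\Delta_C|\big)/\sqrt{w^T\Sigma w}$. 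Since $\Phi$ is strictly increasing, comparing $\mathrm{WGE}$'s amounts to comparing the $\rho$'s.

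\emph{Step 3 (evaluating $\rho$ using orthogonality).} Write $d_D^2=\Delta_D^T\Sigma^{-1}\Delta_D$, $d_C^2=\Delta_C^T\Sigma^{-1}\Delta_C$; \cref{as:orthogonality} reads $\Delta_C^T\Sigma^{-1}\Delta_D=0$. Solving $A\xi=v$ with the ansatz $\xi=s\Sigma^{-1}\Delta_D+t\Sigma^{-1}\Delta_C$ decouples thanks to orthogonality and is elementary: it yields $w^*_\text{DS}=\tfrac{1}{4+d_D^2}\Sigma^{-1}\Delta_D$ and $w^*_\text{MU}=\tfrac{1}{8\mathbb E[\Lambda^2]+d_D^2}\Sigma^{-1}\Delta_D$, both positive multiples of $\Sigma^{-1}\Delta_D$, hence $w^{*T}\Delta_C=0$ and $\rho=\tfrac12 d_D^2/d_D=\tfrac12 d_D$ for each; thus $\mathrm{WGE}(f_{\theta^*_\text{MU}})=\mathrm{WGE}(f_{\theta^*_\text{DS}})=\Phi(-d_D/2)$, and $\mathrm{WGE}(f_{\theta^*_\text{DS}})=\mathrm{WGE}(f_{\theta^*_\text{UW}})$ is \cref{prop:ds-uw-weights-eq}. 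For SRM the same ansatz gives $w^*_\text{SRM}=\tfrac14(s\,\Sigma^{-1}\Delta_D+t\,\Sigma^{-1}\Delta_C)$ with $t=-(1-4\pi_0)/D$, where $D=(1+\tfrac{d_D^2}{4})(1+\tfrac{d_C^2}{4})-\big(\tfrac{(1-4\pi_0)d_D d_C}{4}\big)^2>0$; for $\pi_0<1/4$ we have $t\neq0$, so (in the non-degenerate case $\Delta_C\neq0$ of a genuine domain shift) $w^{*T}_\text{SRM}\Delta_C=t\,d_C^2\neq0$. Therefore
$\rho(w^*_\text{SRM})=\big(\tfrac12 w^{*T}_\text{SRM}\Delta_D-\tfrac12|w^{*T}_\text{SRM}\Delta_C|\big)/\|w^*_\text{SRM}\|_\Sigma<\big(\tfrac12 w^{*T}_\text{SRM}\Delta_D\big)/\|w^*_\text{SRM}\|_\Sigma\le\tfrac12 d_D$, the final step being Cauchy–Schwarz in the $\Sigma$-inner product ($w^T\Delta_D=\langle w,\Sigma^{-1}\Delta_D\rangle_\Sigma\le\|w\|_\Sigma\,d_D$). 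Applying $\Phi$ gives $\mathrm{WGE}(f_{\theta^*_\text{SRM}})>\Phi(-d_D/2)=\mathrm{WGE}(f_{\theta^*_\text{DS}})$, completing the chain.

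\emph{Main obstacle.} The delicate point is the strict inequality: one must verify that $w^*_\text{SRM}$ genuinely acquires a nonzero $\Sigma^{-1}\Delta_C$ component (so $w^{*T}_\text{SRM}\Delta_C\neq0$) and that this strictly lowers $\rho$. The Cauchy–Schwarz route above is what makes this clean — it avoids a head-on comparison of the $\Phi$-arguments of SRM and DS — and it also exposes the mechanism: under \cref{as:orthogonality} the within-domain discriminative direction is $\Sigma^{-1}\Delta_D$ for every group, so DS/UW/MU align $w$ with it and equalize all four group errors, whereas SRM, seeing the imbalanced priors through $\bar\Delta=\Delta_D-(1-4\pi_0)\Delta_C$, tilts $w$ toward $\Sigma^{-1}\bar\Delta$ and pays for the misalignment. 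It is also worth stating explicitly the harmless non-degeneracy hypotheses $\Delta_D\neq0$ (otherwise there is no class signal) and $\Delta_C\neq0$ (otherwise $\theta^*_\text{SRM}=\theta^*_\text{DS}$ and the inequality degenerates to an equality), both implicit in the parallelogram of \cref{fig:data}.
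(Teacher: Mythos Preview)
Your proof is correct but follows a genuinely different path from the paper's, and the difference is worth noting. For the strict inequality $\mathrm{WGE}(f_{\theta^*_\text{SRM}})>\mathrm{WGE}(f_{\theta^*_\text{DS}})$, the paper writes each $w^*$ via two applications of Sherman--Morrison (their Lemma~2) as $w^*=\gamma(\Sigma^{-1}\Delta_D-c\,\Sigma^{-1}\Delta_C)$, computes the resulting $\Phi$-argument for SRM explicitly as a function of $\pi_0$, and then takes a derivative in $\pi_0$ to show it is strictly decreasing on $[0,1/4]$; since DS is SRM at $\pi_0=1/4$, the inequality follows. Your route is more structural: you observe that for any $w$ with the common bias, $\mathrm{WGE}=\Phi(-\rho(w))$ with $\rho(w)=\big(\tfrac12 w^T\Delta_D-\tfrac12|w^T\Delta_C|\big)/\sqrt{w^T\Sigma w}$, and then a single Cauchy--Schwarz in the $\Sigma$-inner product gives $\rho(w)\le\tfrac12 d_D$ with equality iff $w\parallel\Sigma^{-1}\Delta_D$; the strict inequality then reduces to checking that $w^*_\text{SRM}$ picks up a nonzero $\Sigma^{-1}\Delta_C$ component, which your $2\times2$ ansatz in the $\{\Sigma^{-1}\Delta_D,\Sigma^{-1}\Delta_C\}$ basis verifies directly. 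Your argument is shorter, avoids the derivative computation, and makes the mechanism transparent (DS/UW/MU align with the group-invariant discriminative direction, SRM does not); the paper's derivative analysis, on the other hand, yields the extra information that the SRM worst-group error is monotone in $\pi_0$, which your approach does not recover. For the closed-form parameters themselves, your indicator decomposition $X=m+U\Delta_D+V\Delta_C+Z$ is a tidy repackaging of the paper's iterated law-of-total-variance computation, and your self-contained MU derivation fills in what the paper defers to~\cite{yaoImprovingOutofDistributionRobustness2022}. Your explicit flagging of the non-degeneracy $\Delta_C\neq0$ is also appropriate: without it the strict inequality collapses to equality, a caveat the paper leaves implicit.
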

% \vspace{-0.1in}
\begin{proof}[Proof sketch]\let\qed\relax
The proof follows similarly to that of \cref{prop:ds-uw-weights-eq} using the appropriate values of $c(y,d)$, $(y,d) \in \{0,1\} \times \{S,T\}$, for SRM and MU. We employ a derivative analysis to show $\text{WGE}(f_{\theta^*_\text{SRM}}) > \text{WGE}(f_{\theta^*_\text{DS}})$ and then show the remaining equalities. See \cref{fig:optimal_lines} for a plot showing the optimal planes for each method for data satisfying \cref{as:latent_normal,as:equal_priors,as:mean_difference,as:orthogonality}.   See \cref{appendix:wge-comparison} for a detailed proof.
\end{proof}

In practice, we only have access to a finite number of samples. In this setting, we can approximate the risk in \eqref{eq:risk} by the empirical risk, defined for a given dataset of $n$ samples $(x_i, y_i, d_i) \in \mathcal{X}\times\mathcal{Y}\times\mathcal{D}$, $i = 1,\dots,n$, drawn i.i.d. from $P_{X,Y,D}$ and a loss $\ell$ as 
\begin{equation}
    \hat{R}(f_\theta) \coloneqq \frac{1}{n}\sum_{i=1}^n \ell(f_\theta(x_i),y_i).
    \label{eq:emp-risk}
\end{equation}
We consider the same four methods as before where SRM is now just the empirical risk minimization (ERM). The empirically optimal $f_{\hat{\theta}}=\hat{w}^Tx+\hat{b}$ is obtained from  
\begin{equation}
    \hat{\theta} = \argmin_\theta \frac{1}{n}\sum_{i=1}^n \ell(f_\theta(x_i),y_i)c(y_i,d_i),
    \label{eq:emp-min}
\end{equation}
where again $c(y,d)=1$, $(y,d)\in\mathcal{Y}\times\mathcal{D}$ for ERM, DS, and MU, but $c(y,d) = {n}/({4n^{(y,d)}})$ for UW with $n^{(y,d)}$ being the number of samples in the group $(y,d)$. In the case of DS, rather than using $n$ samples, we use $4n_\text{min}\coloneqq\min_{(y,d)} n^{(y,d)}$ samples. Finally, for MU, we use $x_i=\lambda_i x_{i_1} + (1-\lambda_i)x_{i_2}$ where $i_1$ and $i_2$ are uniformly chosen from the indices of samples in the groups $(y_i, G)$ and $(y_i, R)$, respectively, and the mixup parameter $\lambda_i \sim \text{Beta}(\alpha,\alpha)$.
The following result compares the sample complexity of each of the four methods. We use the notation $O_p(\cdot)$ for the stochastic boundedness of a sequence of random variables \cite{bishop2007discrete}. More formally, for a sequence of random variables $X_n$ and a sequence of positive scalars $a_n$, $\|X_n\|_2 = O_p(a_n)$ if for any $\varepsilon>0$ there exist finite $M>0$ and $N>0$ such that
\begin{equation}
    P\left(\|{X_n}/{a_n}\|_2 \le M \right) \ge 1-\varepsilon, \quad \text{for all } n > N.
\end{equation}
\begin{theorem}
    Let $\ell(\hat{y},y) = \|y-\hat{y}\|_2^2$, $\hat{y} \in \mathbb R$, $y \in \mathcal{Y} $. Consider $n$ i.i.d. samples generated according to \eqref{eq:gauss-model}, with $n_\text{min}$ being the number of samples in the minority groups. Then
    \begin{align*}
        \| \theta^*_\text{ERM} - \hat{\theta}_\text{ERM}\|_2^2 = \| \theta^*_\text{UW} - \hat{\theta}_\text{UW}\|_2^2 =O_p(p/n),
    \end{align*}
    \[\| \theta^*_\text{DS} - \hat{\theta}_\text{DS}\|_2^2 =O_p(p/n_\text{min}),\]
    and
    \[\| \theta^*_\text{MU} - \hat{\theta}_\text{MU}\|_2^2 =O_p(p\log(n)/n + p/n_\text{min}).\]
    \label{thm:sample_complexity}
    \vspace{-20pt}
\end{theorem}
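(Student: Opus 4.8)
The plan is to treat all four estimators as solutions of (weighted) least‑squares normal equations and then run a first‑order perturbation argument. Since $\ell(\hat y,y)=\|y-\hat y\|_2^2$, writing $\tilde x\coloneqq(x^T,1)^T\in\mathbb R^{p+1}$, each population minimizer $\theta^*$ in \eqref{eq:gen-opt} solves a linear system $A\theta^*=c$ with $A\coloneqq\mathbb E[c(Y,D)\,\tilde X\tilde X^T]$ and $c\coloneqq\mathbb E[c(Y,D)\,Y\tilde X]$ — for DS the priors are reset to $1/4$, and for MU the expectations are taken over the mixed representation $X=\Lambda X_1+(1-\Lambda)X_2$ — while the corresponding empirical minimizer $\hat\theta$ in \eqref{eq:emp-min} solves $\hat A\hat\theta=\hat c$ with $\hat A,\hat c$ the matching empirical (weighted) averages. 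Subtracting and using $A\theta^*=c$,
\[
\hat\theta-\theta^*=\hat A^{-1}\bigl((\hat c-c)-(\hat A-A)\theta^*\bigr),\qquad
\|\hat\theta-\theta^*\|_2^2\le 2\|\hat A^{-1}\|_{\mathrm{op}}^2\bigl(\|\hat c-c\|_2^2+\|(\hat A-A)\theta^*\|_2^2\bigr).
\]
So it suffices to show (i) $\|\hat A^{-1}\|_{\mathrm{op}}=O_p(1)$ and (ii) each noise term is of the claimed order. For (i), $A$ is positive definite with $\lambda_{\min}(A)$ bounded below by a constant independent of $p$ under the mild regularity that $\lambda_{\min}(\Sigma)$ is bounded below and the $\|\mu^{(y,d)}\|$ are bounded; then standard sample‑covariance concentration for (mixtures of) Gaussians gives $\|\hat A-A\|_{\mathrm{op}}=o_p(1)$ in the relevant regime $p=o(n)$ (outside which the stated rates are vacuous), so $\lambda_{\min}(\hat A)\ge\tfrac12\lambda_{\min}(A)$ with high probability.

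For ERM, both $(\hat A-A)\theta^*$ and $\hat c-c$ are averages of $n$ i.i.d. mean‑zero $(p+1)$‑vectors, so $\mathbb E\|(\hat A-A)\theta^*\|_2^2\le\tfrac1n\mathbb E[\|\tilde X\|_2^2(\tilde X^T\theta^*)^2]$ and $\mathbb E\|\hat c-c\|_2^2\le\tfrac1n\mathbb E\|\tilde X\|_2^2$. An Isserlis fourth‑moment computation, conditioning on the group, shows $\mathbb E[\|\tilde X\|_2^2(\tilde X^T\theta^*)^2]=O(\operatorname{tr}\Sigma)=O(p)$ — crucially $O(p)$, not $O(p^2)$, because $\|\tilde X\|_2^2$ concentrates at $\Theta(p)$ while $\tilde X^T\theta^*$ stays $\Theta(1)$, using $\|\theta^*\|_2=O(1)$ (which follows from \eqref{eq:opt-w-ds-1}--\eqref{eq:opt-b-ds-1} and $\Sigma\succeq\lambda_{\min}(\Sigma)I$). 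Markov's inequality then yields $\|\theta^*_\text{ERM}-\hat\theta_\text{ERM}\|_2^2=O_p(p/n)$. For UW I would rewrite the weighted average $\tfrac1n\sum_i\tfrac{n}{4n^{(y_i,d_i)}}g(x_i,y_i,d_i)=\sum_{(y,d)}\tfrac14\,\bar g^{(y,d)}$ as a convex combination of within‑group empirical means; since $n^{(y,d)}=\Theta_p(n)$ for every group and, by \cref{prop:ds-uw-weights-eq}, $\theta^*_\text{UW}=\theta^*_\text{DS}$ is exactly the minimizer for this balanced weighting, the same per‑group coordinate‑variance bound gives $O_p(p/n)$. For DS I would condition on $n_\text{min}$: the retained dataset is $4n_\text{min}$ i.i.d. draws from the balanced ($\pi^{(y,d)}=1/4$) Gaussian mixture, whose least‑squares minimizer is $\theta^*_\text{DS}$, so the ERM argument with $n$ replaced by $4n_\text{min}$ gives $O_p(p/n_\text{min})$.

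The MU case is the main obstacle, and I would handle it by a two‑level decomposition. Let $\mathcal B$ denote the original sample and condition on it. Given $\mathcal B$ and the per‑index independent choices of the two source indices and $\lambda_i\sim\mathrm{Beta}(\alpha,\alpha)$, the synthetic vectors $\tilde x_i$ are conditionally i.i.d., so $\hat A_\text{MU},\hat c_\text{MU}$ concentrate around their $\mathcal B$‑conditional means. Because $\tilde x_i$ is a convex combination of base Gaussian vectors, it is conditionally bounded by $\max_{j}\|x_j\|_2=O_p(\sqrt p+\sqrt{\log n})$, and controlling this ``Monte‑Carlo‑over‑synthetic‑draws'' error (in particular the operator‑norm control needed for invertibility) via a matrix‑Bernstein/truncation step incurs the extra logarithmic factor, producing the $p\log n/n$ term. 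Next, the $\mathcal B$‑conditional mean of $\tilde x_i\tilde x_i^T$ (resp. $y_i\tilde x_i$) is precisely the MU population second moment computed with each class‑conditional pair of domain distributions replaced by the corresponding within‑group empirical distributions; for each class $y$ one of the two domain subgroups is a minority group with only $n_\text{min}=\Theta_p(\pi_0 n)$ samples, so this empirical‑vs‑population gap is $O_p(p/n_\text{min})$ by the coordinate‑variance argument applied to the minority base sample (the $\mathrm{Beta}$ moments $\mathbb E[\Lambda^2],\operatorname{Var}(\Lambda)$ appearing in \eqref{eq:opt-w-mu-1} are absorbed into the constants). Combining, $\|\hat A_\text{MU}-A_\text{MU}\|_{\mathrm{op}}$ and $\|\hat c_\text{MU}-c_\text{MU}\|_2$ are $O_p(\sqrt{p\log n/n}+\sqrt{p/n_\text{min}})$, and the perturbation bound with $\|\hat A_\text{MU}^{-1}\|_{\mathrm{op}}=O_p(1)$ gives $\|\theta^*_\text{MU}-\hat\theta_\text{MU}\|_2^2=O_p(p\log n/n+p/n_\text{min})$.

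The delicate points I anticipate are: cleanly separating the two error sources for MU (the synthetic resampling is not independent of $\mathcal B$, but it is conditionally independent given $\mathcal B$, which is exactly what the argument uses), pinning down the logarithmic factor that comes from the maximum of $n$ Gaussian‑tailed base‑sample norms, and verifying the $O(p)$—rather than $O(p^2)$—fourth‑moment bounds uniformly in $p$, which rests on $\lambda_{\min}(\Sigma)$ bounded below and $\|\mu^{(y,d)}\|$ bounded (hence $\|\theta^*\|_2=O(1)$) for all four methods.
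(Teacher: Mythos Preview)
Your proposal is correct and is in fact considerably more detailed than what the paper supplies. The paper's own argument is only a sketch: it obtains the ERM rate by ``a standard application of the weak law of large numbers'' (citing \cite{KRIKHELI20217955}), gets DS by replacing $n$ with $4n_{\min}$, calls UW ``a straightforward generalization of ERM to the weighted least squares setting,'' and for MU simply defers to \cite{yaoImprovingOutofDistributionRobustness2022}. Your normal-equation perturbation framework $\hat\theta-\theta^*=\hat A^{-1}((\hat c-c)-(\hat A-A)\theta^*)$ is exactly the standard argument those references would unpack, and your handling of ERM, UW (via per-group sample means with $n^{(y,d)}=\Theta_p(n)$ for fixed $\pi_0$), and DS (condition on $n_{\min}$, apply ERM with $4n_{\min}$ balanced samples) matches the paper's outline step for step.

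The one place you go beyond the paper is MU: the paper gives no self-contained argument at all, whereas you propose a concrete two-level decomposition (Monte-Carlo error over synthetic draws conditional on the base sample, then empirical-vs-population error for the within-group means). This is a reasonable route to the stated rate, and you correctly flag the origin of the $\log n$ factor and the minority-group bottleneck $p/n_{\min}$ as the delicate points. Since the paper simply imports the MU bound from \cite{yaoImprovingOutofDistributionRobustness2022}, your version is not so much a different proof as an actual proof where the paper provides none; whether the precise constant and the mechanism behind the $\log n$ match Yao et al.'s derivation would require checking that reference directly.
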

\begin{proof}[Proof sketch]\let\qed\relax The sample complexity bound for ERM follows from a standard application of the weak law of large numbers \cite{KRIKHELI20217955}. Furthermore, the bound for DS is obtained by setting $n=4n_\text{min}$ while that for UW is a straightforward generalization of that for ERM to the weighted least squares setting. The bound for MU is from \cite{yaoImprovingOutofDistributionRobustness2022}. 
%For space reasons, this proof is not included in the Appendix.
\end{proof}
%\noteMW{Say something about how bounds are loose but can potentially be tightened in future work}

\section{Experimental Results}
We present numerical results for both synthetic and real-world data for all the augmentation techniques and ERM. 
%For our synthetic experiments, we empirically calculate the weights for each method by performing data augmentation and then calculate the sample variance (of $X$) and covariance (of $X,Y$) matrices. This training step is repeated 10 times to account for randomness introduced by downsampling and mixing. We average over these runs when reporting statistics. 
%For real-world experiments, we instead use a logistic regression model with $\ell_1$ norm penalty to align with state of the art methods.

\subsection{Orthogonal Latent Gaussians}
\begin{figure}
    \centering
    \includegraphics[width=0.95\linewidth]{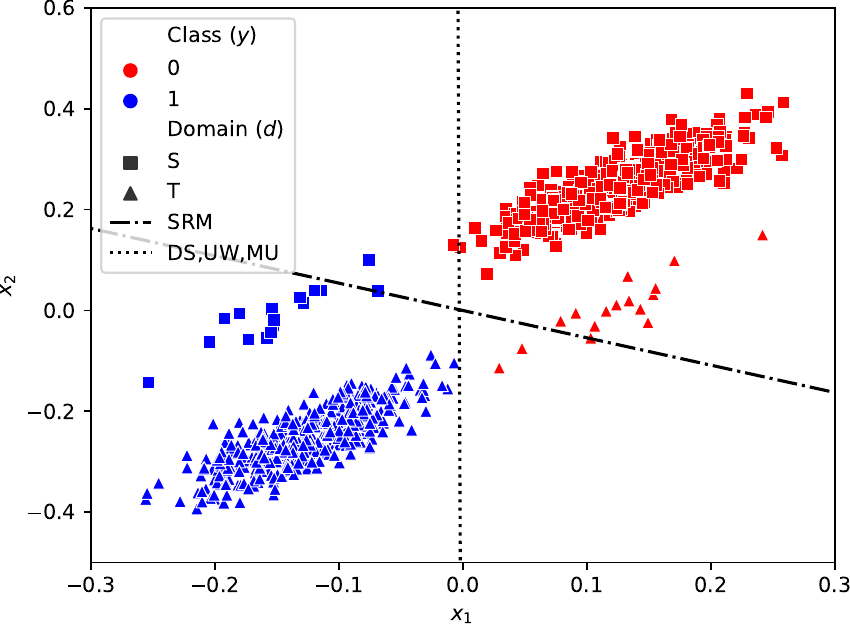}
    \caption{The optimal prediction planes for DS, UW, MU, and SRM are shown overlaid on data sampled from Gaussian mixtures satisfying \cref{as:latent_normal,as:equal_priors,as:mean_difference,as:orthogonality}. The SRM model largely ignores the minority group for each class. }
    \label{fig:optimal_lines}
\end{figure}
We first examine a numerical analog to the mixture Gaussian model given in \cref{as:latent_normal,as:equal_priors,as:mean_difference,as:orthogonality} to empirically study the convergence of DS, UW, and MU methods in terms of MSE from the corresponding statistical solutions. We generate $n$ data points and calculate the empirical weights for each method by performing the corresponding data augmentation and then computing the sample variance (of $X$) and covariance (of $X,Y$) matrices used in the closed-form solution to \eqref{eq:emp-min} with $\ell(\hat{y},y) = \|y-\hat{y}\|_2^2$, $\hat{y} \in \mathbb R$, $y \in \mathcal{Y} $. This training step is repeated 10 times to account for randomness introduced by DS and MU. Furthermore, we average over 10 runs (data generation and training) for different random seeds to account for randomness in the training data. We average over these runs when reporting statistics.

We generate group-conditional Gaussian data with the following parameters satisfying \cref{as:equal_priors,as:latent_normal,as:mean_difference,as:orthogonality}: % orthogonal Gaussian distribution defined by 
\begin{align*}
    \Delta_C &= \begin{pmatrix}
        0 & \frac{1}{4}
    \end{pmatrix}^T &\quad
    \Delta_D &= \begin{pmatrix}
        -\frac{1}{4} & -\frac{1}{4}
    \end{pmatrix}^T \\
    \Sigma &= \begin{pmatrix}
        .002 & .002 \\
        .002 & .003 \\
    \end{pmatrix} &\quad
    \pi_0 &= \frac{1}{64}.
\end{align*}

 In \cref{fig:wge_gaussian}, we compare the WGE of each training method as a function of the number of samples. We see that each of the data augmentation methods outperforms (non-augmented vanilla) ERM, and they all converge to the same WGE. The equivalence of these methods for large $n$ is implied by \cref{thm:wge-comparison}. However,  we see interesting behavior at small $n$ in \cref{fig:wge_gaussian_zoom}: DS achieves worse WGE than UW or MU at very small $n$. This may be explained by the fact that DS often throws away data while MU and UW keep all available data. Therefore, DS may not be well-suited to limited data regimes.

 \begin{figure}
    \centering
    \includegraphics[width=0.95\linewidth]{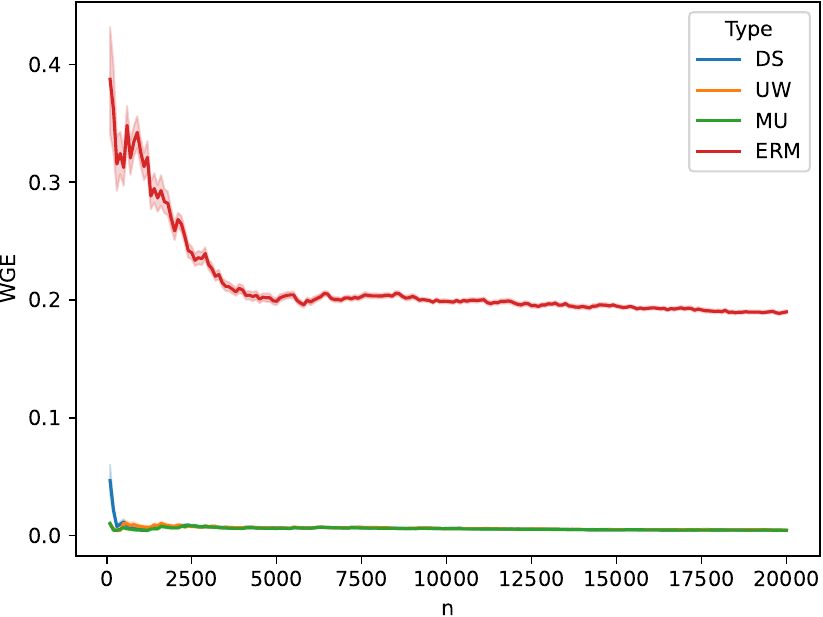}
    \caption{WGE for UW, DS, MU and ERM for the data in \cref{fig:optimal_lines}. As the number of samples $n$ increases, UW, DS, and MU perform better than ERM.}
    \label{fig:wge_gaussian}
\end{figure}

\begin{figure}
    \centering
    \includegraphics[width=0.95\linewidth]{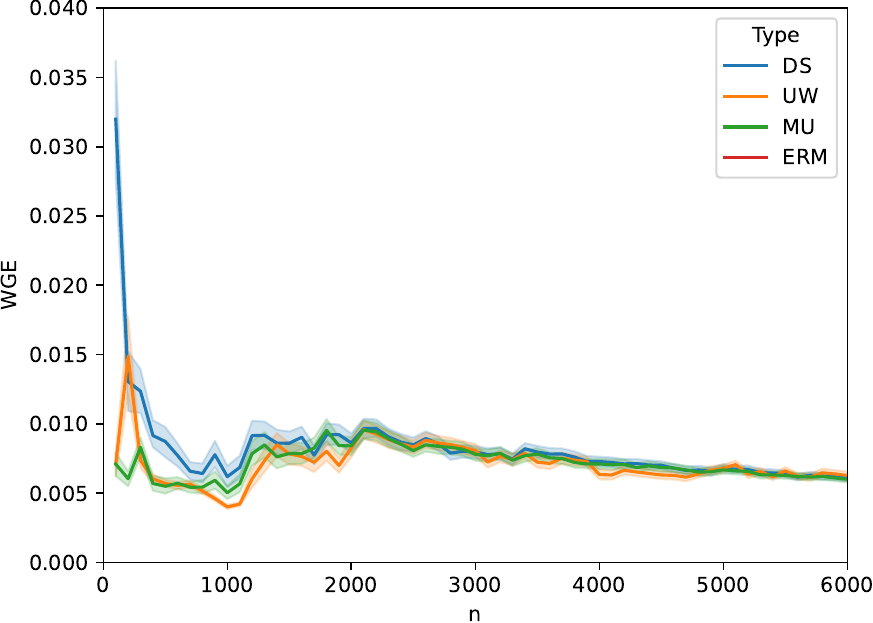}
    \caption{Zoomed in version of \cref{fig:wge_gaussian} where we see the differences between data augmentation methods, especially for small $n$.}
    \label{fig:wge_gaussian_zoom}
\end{figure}

We next compare the empirical weights and bias obtained by each method to the corresponding statistically optimal weights and bias as calculated in \eqref{eq:opt-w-ds-1}, \eqref{eq:opt-b-ds-1} for DS and UW, in \eqref{eq:opt-w-srm-1}, \eqref{eq:opt-b-srm-1}, for SRM, and in \eqref{eq:opt-w-mu-1}, \eqref{eq:opt-b-mu-1} for MU. We report the MSE as a function of $n$ in \cref{fig:mse_gaussian} and compare our results to the bounds found in \cref{thm:sample_complexity}. We see that each method converges at a similar rate, suggesting that tighter sample complexity analysis may be possible.

Finally we demonstrate that each of the data augmentation methods is robust to the prevalence of the minority group. For a fixed $n=10,000$, we train each method with varying $\pi_0$. We see in \cref{fig:gaussian_wge_pi} that the data augmentation methods are robust to even very small minority groups. We additionally note that the performance of ERM approaches that of the data augmented methods as $\pi_0 \rightarrow 1/4$, i.e., the prior for a group-balanced dataset.

\begin{figure}
    \centering
    \includegraphics[width=0.95\linewidth]{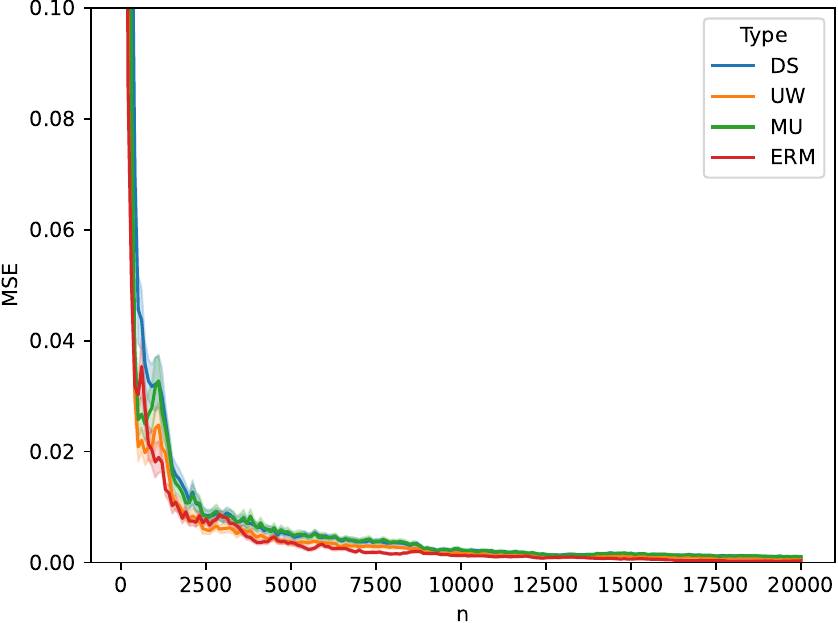}
    \caption{Mean squared error of the estimated weights from data as compared to the expected weights. We see that each method converges quickly to the expected weights as a function of $n$.}
    \label{fig:mse_gaussian}
\end{figure}
\begin{figure}
    \centering
    \includegraphics[width=0.95\linewidth]{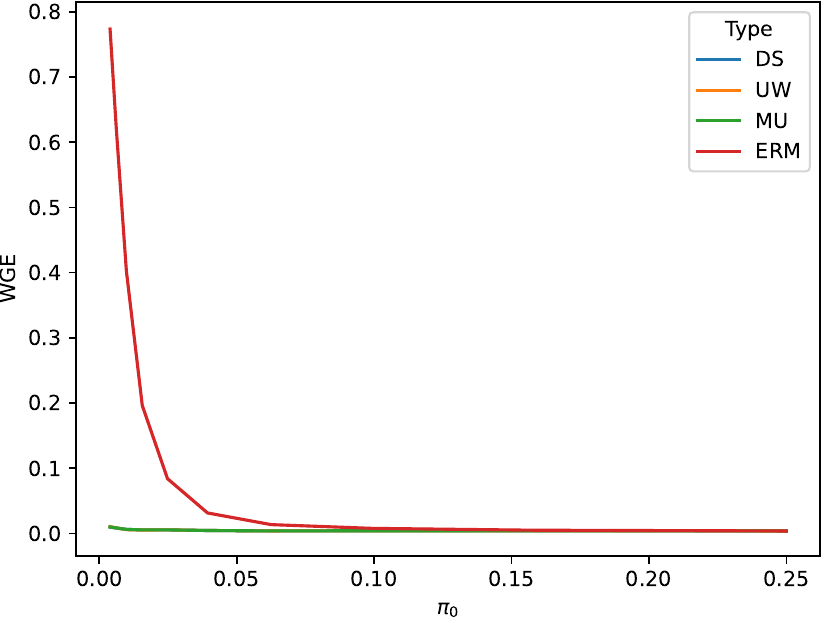}
    \caption{Worst-group error on latent Gaussian subpopulations for each data augmentation technique as a function of $\pi_0$, the prevalence of the minority groups. We see that as $\pi_0$ approaches $1/4$ (a balanced dataset), the WGE of vanilla ERM decreases to match that of the data augmented methods.}
    \label{fig:gaussian_wge_pi}
\end{figure}
\subsection{Publicly Available Large Datasets}
%We examine the performance of UW, DS, and MU on publicly available large datasets commonly studied for subpopulation shifts \cite{yang2023change}. 
We next consider the CMNIST \cite{arjovsky2019invariant}, CelebA \cite{liu2015faceattributes}, and Waterbirds \cite{sagawa2019distributionally} datasets, which are oft-used in LLR \cite{yang2023change}. CMINST \cite{arjovsky2019invariant} is a variant of the MNIST handwritten digit dataset in which digits 0-4 are labeled $y=0$ and digits 5-9 are labeled $y=1$. The domain is given by color: 90\% of digits labeled $y=0$ are colored green and 10\% are colored red and vice-versa for those labeled $y=1$. %Thus, we for color as a domain and we see that there exists a correlation between the color of the digit and its label.

CelebA \cite{liu2015faceattributes} is a dataset of celebrity faces. We predict hair color as either blonde ($y=1$) or non-blonde ($y=0$), while the domain label is either male ($d=1$) or female ($d=0$). There is a naturally induced correlation between hair color and gender in the dataset due to the prevalence of blonde females. 

Waterbirds \cite{sagawa2019distributionally} is a semi-synthetic image dataset comprised of land birds ($y=1$) or sea birds ($y=0$) on  land ($d=1$) or sea backgrounds ($d=0$). There is a correlation between background and bird type in the training data (sea birds being more present with sea backgrounds) but this correlation is absent in the group- and class-balanced validation data. 

Each dataset is broken into training, validation, and test data. The training data is used to train a large model (ResNet-50 architecture) from which we extract the embedding function $\phi(\cdot)$ used to obtain the latent representations. We view the validation data as a retraining dataset whose representations are used to retrain the last layer of the pretrained model. 
%As such, we use the representations of the validation data induced by $\phi$ to retrain the last layer of the pretrained model.

In practice, state-of-the-art methods do not employ the MSE loss. Instead, common methods such as DFR \cite{kirichenko2022last} use highly regularized losses such as log loss with $\ell_1$ penalty. We proceed following this example, and train logistic models with strong $\ell_1$ regularization.

For each of these datasets, we see in \cref{tab:real_world} that all of the data augmentation methods perform similarly and outperform ERM alone. This suggests that the analysis provided here may hold more generally than just on latent Gaussian subpopulations. 
We see that UW and ERM have no variance over runs which is due to the fact that both are deterministic methods, whereas DS and MU introduce randomness. Additionally, these results suggest that DS -- the most common data augmentation method for WGA -- may not have strong advantages over UW or MU, which have advantages in variance, though not in computational complexity. 

\begin{table}
\centering
\caption{WGE (lower is better) Mean $\pm$ StDev \\(averaged over 10 runs)} 
\label{tab:real_world}
\begin{tabular}{llll}
\hline
 & CMNIST & CelebA & Waterbirds \\ \hline
DS     &   7.0 $\pm$ 0.4   &   19.3 $\pm$  3.1  &     9.9 $\pm$  0.8    \\ \hline
UW     &    5.4 $\pm$ 0.0  &   21.7 $\pm$  0.0  &      10.0  $\pm$   0.0 \\ \hline
MU     &    6.2  $\pm$ 0.4 &    22.9 $\pm$ 1.5  &     10.0   $\pm$  0.7 \\ \hline
ERM     &    9.1  $\pm$ 0.0 &    56.7 $\pm$  0.0  &     14.5   $\pm$  0.0 \\ \hline
\end{tabular}
\end{table}

\section{Conclusion}
We have presented a new result that the well-known  data augmentation techniques of DS and UW have statistically identical performance. For LLR, when the latent representations that are input to the last layer are modeled as Gaussian mixtures, MU also achieves the same statistical worst-group accuracy as DS and UW, all of which are better than SRM. Our results are validated for a synthetic Gaussian mixture dataset and appear to hold for several large publicly available datasets. A natural extension is to obtain more refined sample complexity, or equivalently excess risk bounds, when explicitly accounting for the size of each group/subpopulation. An equally compelling question to address is characterizing the finite sample differences between UW and DS for a larger class of distributions building upon the work in \cite{Chaudhuri_Ahuja_Arjovsky_Lopez-Paz_2022}.

\newpage
\bibliographystyle{IEEEtran}
\bibliography{gen}
\if \extended 1
\onecolumn
\appendix
\subsection{Proof of \cref{thm:ds-uw-eq}}
\label{appendix:thm1-proof}
\noindent Expanding the objective of \eqref{eq:gen-opt} for DS, we obtain
\begin{align}
    \mathbb{E}_{P_{X,Y,D}}[\ell(f(X),Y)] &= \mathbb{E}_{P_{Y,D}}\mathbb{E}_{P_{X|Y,D}}[\ell(f(X),Y)|Y,D] \nonumber\\
    & = \sum_{(y,d) \in \mathcal{Y}\times\mathcal{D}} \pi^{(y,d)} \mathbb{E}_{P_{X|Y,D}}[\ell(f(X),Y)|Y=y,D=d] \nonumber \\
    & = \frac{1}{4}\sum_{(y,d) \in \mathcal{Y}\times\mathcal{D}} \mathbb{E}_{P_{X|Y,D}}[\ell(f(X),Y)|Y=y,D=d].
    \label{eq:ds-obj-expanded}
\end{align}
Expanding the objective of \eqref{eq:gen-opt} for UW, we obtain
\begin{align}
    \mathbb{E}_{P_{X,Y,D}}[\ell(f(X),Y)c(Y,D)] &= \mathbb{E}_{P_{Y,D}}\left[c(Y,D)\mathbb{E}_{P_{X|Y,D}}[\ell(f(X),Y)|Y,D]\right] \nonumber\\
    & = \sum_{(y,d) \in \mathcal{Y}\times\mathcal{D}} \frac{\pi^{(y,d)} }{4\pi^{(y,d)}}\mathbb{E}_{P_{X|Y,D}}[\ell(f(X),Y)|Y=y,D=d] \nonumber \\
    & = \frac{1}{4}\sum_{(y,d) \in \mathcal{Y}\times\mathcal{D}} \mathbb{E}_{P_{X|Y,D}}[\ell(f(X),Y)|Y=y,D=d].
    \label{eq:uw-obj-expanded}
\end{align}
Therefore, the objectives of DS and UW are the same, and as a result so are the corresponding minimizers, if they exist.

\subsection{Proof of \cref{prop:ds-uw-weights-eq}}
\label{appendix:ds-uw-weights-eq}
\noindent Let $L(w,b) = \mathbb E_{P_{X,Y,D}}[(Y-w^TX-b)^2c(Y,D)]$.
In order to solve \eqref{eq:gen-opt}, we begin by solving $\partial L(w,b)/\partial b = 0$:
\begin{align*}
    \frac{\partial L(w,b)}{\partial b} &= -2\mathbb E[(Y-w^TX-b)c(Y,D)] = 0  \\
    % &=-2\left(\mathbb E_{P_{Y,D}}[Yc(Y,D)] - w^T\mathbb E_{P_{X,Y,D}}[Xc(Y,D)]-b\mathbb E_{P_{Y,D}}[c(Y,D)] \right) = 0 \\
    & \implies b^* = \frac{\mathbb E[Yc(Y,D)] - w^T\mathbb E[Xc(Y,D)]}{\mathbb E[c(Y,D)]}.
\end{align*}
Note that $\mathbb E[c(Y,D)] = 1$, so
\[b^* = \mathbb E[Yc(Y,D)] - w^T\mathbb E[Xc(Y,D)].\]
Next, we solve $\partial L(w,b^*)/\partial w = 0$:
\begin{align}
    \frac{\partial L(w,b^*)}{\partial w} &= 2\mathbb E\left[\left(Y - w^TX - \mathbb E[Yc(Y,D)] + w^T\mathbb E[Xc(Y,D)]\right)\left(-X^T +\mathbb E[Xc(Y,D)]^T \right) c(Y,D)\right] = 0 \nonumber\\
    %2\left(w^T\mathbb E\left[(X-\mathbb E[Xc(Y,D)])(X-\mathbb E[Xc(Y,D)])^Tc(Y,D)\right]-\mathbb E\left[(Y-\mathbb E[Yc(Y,D)])(X-\mathbb E[Xc(Y,D)])^Tc(Y,D)\right]\right) = 0  \\
    % &=-2\left(\mathbb E_{P_{Y,D}}[Yc(Y,D)] - w^T\mathbb E_{P_{X,Y,D}}[Xc(Y,D)]-b\mathbb E_{P_{Y,D}}[c(Y,D)] \right) = 0 \\
    & \implies w^* = \mathbb E\left[(X-\mathbb E[Xc(Y,D)])(X-\mathbb E[Xc(Y,D)])^Tc(Y,D) \right]^{-1} \times \nonumber \\
    & \qquad \qquad \qquad\mathbb E\left[(X-\mathbb E[Xc(Y,D)])(Y-\mathbb E[Yc(Y,D)])^Tc(Y,D) \right].
    \label{eq:opt-weights}
\end{align}
In order to derive the general form for the WGE of a model $f_{\theta}$, we begin by deriving the individual error terms $E^{(y,d)}(f_{\theta})$, $(y,d) \in \{0,1\} \times \{S,T\}$, as follows:

\noindent\begin{minipage}[t]{.5\linewidth}
  \begin{align*}
    E^{(0,d)}(f_{\theta}) &\coloneqq P\left(\mathbbm 1\left\{w^TX+b > 1/2\right\} \ne Y \mid Y=0,D=d\right) \\
    & = P\left(w^TX+b > 1/2 \mid Y=0,D=d\right) \\
    & = 1-\Phi\left(\frac{1/2-(w^T\mu^{(0,d)}+b)}{\sqrt{w^T\Sigma w}} \right) \\
    & = \Phi\left(\frac{w^T\mu^{(0,d)}+b - 1/2}{\sqrt{w^T\Sigma w}} \right),
\end{align*}
\end{minipage}%
\begin{minipage}[t]{.5\linewidth}
% \vspace{-20pt}
  \begin{align*}
    E^{(1,d)}(f_{\theta}) &\coloneqq P\left(\mathbbm 1\left\{w^TX+b > 1/2\right\} \ne Y \mid Y=1,D=d\right) \\
    & = P\left(w^TX+b \le 1/2 \mid Y=1,D=d\right) \\
    & = \Phi\left(\frac{1/2-(w^T\mu^{(1,d)}+b)}{\sqrt{w^T\Sigma w}} \right).
\end{align*}
\end{minipage}

\subsubsection{Upweighting}
Given $c(y,d) = {1}/({4\pi^{(y,d)}})$ for $(y,d) \in \{0,1\}\times\{S,T\}$, we compute the required expectations to find the optimal $w^*_\text{UW}$ in \eqref{eq:opt-weights} as follows:
\begin{align*}
    \mathbb E[c(Y,D)] &= \frac{1}{4} \sum_{(y,d) \in \{0,1\}\times\{S,T\}}\frac{\pi^{(y,d)}}{4\pi^{(y,d)}} = 1, \\
    \mathbb E[Yc(Y,D)] &= \frac{1}{4} \sum_{(y,d) \in \{0,1\}\times\{S,T\}}\frac{\pi^{(y,d)} y}{4\pi^{(y,d)}} = \frac{1}{2}, \\
    \mathbb E[Xc(Y,D)] &= \mathbb E\left[c(Y,D) \mathbb E[X|D,Y]\right] \\
    & = \frac{1}{4} \sum_{(y,d) \in \{0,1\}\times\{S,T\}} \frac{[\mathbbm{1}(d=T)(\mu^{(0,T)}-\mu^{(0,S)})+(1-y)\mu^{(0,S)}+y\mu^{(1,S)}]\pi^{(y,d)}}{\pi^{(y,d)}} \\
    & = \frac{1}{2}(\mu^{(0,T)} + \mu^{(1,S)}), \\
    \mathbb E[XYc(Y,D)]  
    & = \mathbb E\left[c(Y,D)Y \mathbb E[X|D,Y]\right] \\
    & =  \frac{1}{4} \sum_{(y,d) \in \{0,1\}\times\{S,T\}} \frac{[\mathbbm{1}(d=T)(\mu^{(0,T)}-\mu^{(0,S)})+(1-y)\mu^{(0,S)}+y\mu^{(1,S)}]\pi^{(y,d)}y}{\pi^{(y,d)}} \\
    & = \frac{1}{4}(\mu^{(0,T)}-\mu^{(0,S)} + 2\mu^{(1,S)}), \\
    \mathbb E[XX^Tc(Y,D)]  & = \mathbb E\left[c(Y,D) \mathbb E[XX^T|D,Y]\right] \\
    & = \mathbb E\left[c(Y,D) \mathbb E[ \text{Var}(X|D,Y) +\mathbb E[X|D,Y] \mathbb E[X|D,Y]^T] \right] \\
    & = \frac{1}{4}\Big(4\Sigma + 2\Delta_C \Delta_C^T - 2\Delta_C(\mu^{(0,S)})^T - 2\mu^{(0,S)}\Delta_C^T + 4\mu^{(0,S)}(\mu^{(0,S)})^T \\
    & \qquad\qquad - \Delta_C\Delta_D^T - \Delta_D\Delta_C^T + 2\Delta_D\Delta_D^T+ 2\Delta_D(\mu^{(0,S)})^T + 2\mu^{(0,S)}\Delta_D^T \Big) \\
    & \overset{(i)}{=} \frac{1}{4}\Big(4\Sigma + \Delta_C \Delta_C^T - \Delta_C(\mu^{(0,S)}+\mu^{(1,T)})^T - 2\mu^{(0,S)}\Delta_C^T + \Delta_D\Delta_D^T \\
    & \qquad\qquad + 4\mu^{(0,S)}(\mu^{(0,S)})^T+ \Delta_D(\mu^{(0,S)}+\mu^{(1,T)})^T + 2\mu^{(0,S)}\Delta_D^T \Big) \\
    & \overset{(ii)}{=} \frac{1}{4}\Big(4\Sigma + \Delta_C \Delta_C^T - \Delta_C(\mu^{(0,S)}+\mu^{(1,T)})^T + 2\mu^{(0,S)}(\mu^{(0,T)})^T \\
    & \qquad\qquad + \Delta_D\Delta_D^T + \Delta_D(\mu^{(0,S)}+\mu^{(1,T)})^T + 2\mu^{(0,S)}(\mu^{(1,S)})^T \Big) \\
    & = \frac{1}{4}\Big(4\Sigma + \Delta_C \Delta_C^T + \Delta_D \Delta_D^T+ (\Delta_D - \Delta_C + 2\mu^{(0,S)})(\mu^{(0,T)} + \mu^{(1,S)})^T\Big) \\
    & \overset{(iii)}{=} \frac{1}{4}\Big(4\Sigma + \Delta_C \Delta_C^T + \Delta_D \Delta_D^T+ (\mu^{(0,T)} + \mu^{(1,S)})(\mu^{(0,T)} + \mu^{(1,S)})^T\Big),
\end{align*}
where $(i)$ follows from substituting 
\begin{align*}
    -2\Delta_C(\mu^{(0,S)})^T&=\Delta_C\Delta_D^T-\Delta_C\Delta_C^T-\Delta_C(\mu^{(0,T)}+\mu^{(1,S)})^T, \\
    2\Delta_D(\mu^{(0,S)})^T&=\Delta_D\Delta_C^T-\Delta_D\Delta_D^T+\Delta_D(\mu^{(0,T)}+\mu^{(1,S)})^T,
\end{align*}
$(ii)$ follows from substituting $\Delta_C = \mu^{(0,S)} - \mu^{(0,T)}$ and $\Delta_D = \mu^{(1,S)}-\mu^{(0,S)}$, and $(iii)$ follows from substituting $\mu^{(0,S)}+\mu^{(1,T)}=\mu^{(1,S)}+\mu^{(0,T)}$.
Substituting the above expectations into \eqref{eq:opt-weights} yields
\begin{align*}
    \mathbb E\left[(X-\mathbb E[Xc(Y,D)])(X-\mathbb E[Xc(Y,D)])^Tc(Y,D) \right] & = \mathbb E[XX^Tc(Y,D)] - \mathbb E[Xc(Y,D)]\mathbb E[Xc(Y,D)]^T \\
    & = \Sigma + \frac{1}{4}\Delta_C \Delta_C^T + \frac{1}{4}\Delta_D \Delta_D^T,
\end{align*}
and
\begin{align*}
    \mathbb  E\left[(X-\mathbb E[Xc(Y,D)])(Y-\mathbb E[Yc(Y,D)])^Tc(Y,D) \right]  = \mathbb E[XYc(Y,D)] - \mathbb E[Xc(Y,D)]\mathbb E[Yc(Y,D)] 
     = \frac{1}{4}\Delta_D.
\end{align*}
Therefore,
\begin{equation}
    w^*_\text{UW} = \frac{1}{4}\left(\Sigma + \frac{1}{4}\Delta_C \Delta_C^T + \frac{1}{4}\Delta_D \Delta_D^T\right)^{-1}\Delta_D \quad \text{and} \quad  b^*_\text{UW} = \frac{1}{2}-\frac{1}{2}(w^*_\text{UW})^T\left(\mu^{(0,T)}+\mu^{(1,S)}\right),
    \label{eq:opt-w-uw}
\end{equation}
and hence
\begin{align}
    E^{(0,d)}(f_{\theta_\text{UW}^*})
    = \Phi\left(\frac{(w_\text{UW}^*)^T\left(\mu^{(0,d)}-\frac{1}{2}(\mu^{(0,T)}-\mu^{(1,S)})\right)}{\sqrt{(w_\text{UW}^*)^T\Sigma w_\text{UW}^*}} \right),  E^{(1,d)}(f_{\theta_\text{UW}^*})
    = \Phi\left(\frac{-(w^*_\text{UW})^T\left(\mu^{(1,d)}-\frac{1}{2}(\mu^{(0,T)}-\mu^{(1,S)})\right)}{\sqrt{(w^*_\text{UW})^T\Sigma w^*_\text{UW}}} \right).
    \label{eq:misclassification-error-terms-uw}
\end{align}
We can simplify the expressions in \eqref{eq:misclassification-error-terms-uw} by using the following relations:
\begin{align}
    \mu^{(0,T)}-\frac{1}{2}(\mu^{(0,T)}-\mu^{(1,S)}) &= \frac{1}{2}(\mu^{(0,T)} - \mu^{(1,S)}) = \frac{1}{2}(\mu^{(0,T)} - \mu^{(1,T)} + \mu^{(1,T)} - \mu^{(1,S)}) = -\frac{1}{2}(\Delta_C + \Delta_D), \\
    \mu^{(0,S)}-\frac{1}{2}(\mu^{(0,T)}-\mu^{(1,S)}) &= \frac{1}{2}(\mu^{(0,S)} - \mu^{(0,T)}) + \frac{1}{2}(\mu^{(0,S)} - \mu^{(1,S)})  = \frac{1}{2}(\Delta_C - \Delta_D), \\
    \mu^{(1,S)}-\frac{1}{2}(\mu^{(0,T)}-\mu^{(1,S)}) &= \frac{1}{2}(\mu^{(1,S)} - \mu^{(0,T)}) = \frac{1}{2}(\mu^{(1,S)} - \mu^{(1,T)} + \mu^{(1,T)} - \mu^{(0,T)})  = \frac{1}{2}(\Delta_C + \Delta_D), \\
    \mu^{(1,T)}-\frac{1}{2}(\mu^{(0,T)}-\mu^{(1,S)}) &= \frac{1}{2}(\mu^{(1,T)} - \mu^{(0,T)}) + \frac{1}{2}(\mu^{(1,T)} - \mu^{(1,S)})  = \frac{1}{2}(\Delta_D - \Delta_C). 
\end{align}
Plugging these into \eqref{eq:misclassification-error-terms-uw} for each group $(y,d) \in \{0,1\} \times \{S,T\}$ yields
\[E^{(0,T)}(f_{\theta_\text{UW}^*})
    = E^{(1,S)}(f_{\theta_\text{UW}^*}) = \Phi\left(\frac{-\frac{1}{2}(w_\text{UW}^*)^T\left(\Delta_C+\Delta_D\right)}{\sqrt{(w_\text{UW}^*)^T\Sigma w_\text{UW}^*}} \right), E^{(0,S)}(f_{\theta_\text{UW}^*})
    = E^{(1,T)}(f_{\theta_\text{UW}^*}) = \Phi\left(\frac{\frac{1}{2}(w_\text{UW}^*)^T\left(\Delta_C-\Delta_D\right)}{\sqrt{(w_\text{UW}^*)^T\Sigma w_\text{UW}^*}} \right).\]
Thus,
\begin{align}
    \text{WGE}(f_{\theta_\text{UW}^*}) = \max\left\{\Phi\left(\frac{-\frac{1}{2}(w_\text{UW}^*)^T\left(\Delta_C+\Delta_D\right)}{\sqrt{(w_\text{UW}^*)^T\Sigma w_\text{UW}^*}} \right), \Phi\left(\frac{\frac{1}{2}(w_\text{UW}^*)^T\left(\Delta_C-\Delta_D\right)}{\sqrt{(w_\text{UW}^*)^T\Sigma w_\text{UW}^*}} \right) \right\}
\end{align}
\subsubsection{Downsampling}
Since DS is a special case of SRM with $\pi_0=1/4$, we first derive the optimal model parameters for SRM for any $\pi_0 \le 1/4$. In the case of SRM (and therefore DS), $c(y,d)=1$ for $(y,d)\in\mathcal{Y}\times\mathcal{D}$, so \eqref{eq:opt-weights} simplifies to
\begin{equation}
    w_\text{SRM}^* = \text{Var}(X)^{-1}\text{Cov}(X,Y).
\end{equation}
Let $\pi^{(d|y)} \coloneqq P(D=d|Y=y)$ for $(y,d) \in \mathcal{Y}\times\mathcal{D}$, $\mu^{(y)}\coloneqq\mathbb E[X|Y=y]$ for $y \in \mathcal{Y}$ and $\bar\Delta\coloneqq \mu^{(1)}-\mu^{(0)}$. We compute $\text{Var}(X)$ as follows:
\begin{align}
    \text{Var}(X) &= \mathbb E[\text{Var}(X|Y)] + \text{Var}(\mathbb E[X|Y]) \\
    & = \mathbb E\left[\mathbb E[\text{Var}(X|Y,D)|Y] + \text{Var}(\mathbb E[X|Y,D]|Y)\right] + \text{Var}(\mathbb E[X|Y]) \\
    & = \Sigma + \mathbb E[\text{Var}(\mathbb E[X|Y,D]|Y)] + \text{Var}(\mathbb E[X|Y]) \\
    & =  \Sigma + \mathbb E[\text{Var}(\mathbbm{1}(D=T)(\mu^{(0,T)}-\mu^{(0,S)})+(1-Y)\mu^{(0,S)}+Y\mu^{(1,S)}|Y)] + \text{Var}(\mathbb E[X|Y]) \\
    & = \Sigma + \Delta_C\Delta_C^T\mathbb E[\text{Var}(\mathbbm{1}(D=T)|Y)] + \text{Var}(Y(\mu^{(1)}-\mu^{(0)})+\mu^{(0)}) \\
    & = \Sigma + \Delta_C\Delta_C^T\mathbb E[\text{Var}(D|Y)] + \bar\Delta\bar\Delta^T\text{Var}(Y) \\
    & = \Sigma + \Delta_C\Delta_C^T\mathbb E[Y\pi^{(R|1)}\pi^{(B|1)}+(1-Y)\pi^{(R|0)}\pi^{(B|0)}] + \bar\Delta\bar\Delta^T\pi^{(1)}\pi^{(0)} \\
    & = \Sigma + 2\pi_0(1-2\pi_0)\Delta_C\Delta_C^T + \frac{1}{4}\bar\Delta\bar\Delta^T.
    \label{eq:erm-var}
\end{align}
Next, we compute $\text{Cov}(X,Y)$ as follows:
\begin{align}
    \text{Cov}(X,Y) &= \mathbb E[\text{Cov}(X,Y|Y)] + \text{Cov}(\mathbb E[X|Y], \mathbb E[Y|Y]) \\
    & = \text{Cov}(\mathbb E[X|Y], Y) \\
    & = \text{Cov}(\mu^{(0)}+Y\bar\Delta, Y) \\
    & = \text{Cov}(Y\bar\Delta, Y) \\
    & = \bar\Delta\text{Var}(Y) \\
    & = \pi^{(1)}\pi^{(0)}\bar\Delta \\
     & = \frac{1}{4}\bar\Delta.
     \label{eq:erm-cov}
\end{align}

\noindent In order to write \eqref{eq:erm-var} and \eqref{eq:erm-cov} only in terms of $\Delta_C$ and $\Delta_D$ and to see the effect of $\pi_0$, we show the following relationship between $\bar\Delta$, $\Delta_C$ and $\Delta_D$.
\begin{lemma}
\label{lemma:triangle}
    Let $\bar\Delta\coloneqq \mu^{(1)}-\mu^{(0)}$. Then 
    \begin{align*}
        \Delta_D - \bar\Delta = (1-4\pi_0)\Delta_C
    \end{align*}
\end{lemma}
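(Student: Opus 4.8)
The plan is to prove \cref{lemma:triangle} by a direct computation using the definitions of $\bar\Delta$, $\Delta_C$, and $\Delta_D$ together with the prior structure imposed by \cref{as:equal_priors}. First I would expand $\mu^{(1)} = \mathbb E[X \mid Y=1]$ and $\mu^{(0)} = \mathbb E[X \mid Y=0]$ by conditioning on the domain: since $\pi^{(1)} = \pi^{(0)} = 1/2$ by \cref{as:equal_priors}, the conditional domain probabilities are $\pi^{(d \mid y)} = 2\pi^{(y,d)}$. Using the values from \cref{as:equal_priors}, one has $\pi^{(S \mid 1)} = 1 - 2\pi_0$, $\pi^{(T \mid 1)} = 2\pi_0$, $\pi^{(S \mid 0)} = 2\pi_0$, and $\pi^{(T \mid 0)} = 1 - 2\pi_0$. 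Thus
\begin{align*}
    \mu^{(1)} &= (1-2\pi_0)\mu^{(1,S)} + 2\pi_0 \mu^{(1,T)}, \\
    \mu^{(0)} &= 2\pi_0 \mu^{(0,S)} + (1-2\pi_0)\mu^{(0,T)}.
\end{align*}

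Next I would subtract these to get $\bar\Delta = \mu^{(1)} - \mu^{(0)}$ and rewrite everything in terms of $\Delta_C$ and $\Delta_D$. Using $\Delta_D = \mu^{(1,d)} - \mu^{(0,d)}$ (constant in $d$ by \cref{as:mean_difference}) and $\Delta_C = \mu^{(y,S)} - \mu^{(y,T)}$ (constant in $y$ by the remark following \cref{as:mean_difference}), I can express, say, all four group means relative to a single reference mean, e.g.\ $\mu^{(1,S)} = \mu^{(0,T)} + \Delta_C + \Delta_D$, $\mu^{(1,T)} = \mu^{(0,T)} + \Delta_D$, $\mu^{(0,S)} = \mu^{(0,T)} + \Delta_C$. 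Substituting these into the expressions for $\mu^{(1)}$ and $\mu^{(0)}$ above, the reference term $\mu^{(0,T)}$ cancels in the difference, and a short collection of the $\Delta_C$ and $\Delta_D$ coefficients should yield $\bar\Delta = \Delta_D - (1-4\pi_0)\Delta_C$, which rearranges to the claimed identity $\Delta_D - \bar\Delta = (1-4\pi_0)\Delta_C$.

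The computation is genuinely routine — it is just bookkeeping with the parallelogram relations among the four group means (cf.\ \cref{fig:data}) and the specific prior weights — so there is no real obstacle. The one point requiring a moment of care is making sure the domain-conditional priors are assigned to the correct groups: because the minority groups are $(0,T)$ and $(1,S)$, the coefficient $2\pi_0$ multiplies $\mu^{(0,T)}$ in $\mu^{(0)}$ wait — actually $\mu^{(0,T)}$ gets weight $\pi^{(T\mid 0)} = 1-2\pi_0$ since $(0,T)$ is a minority group with $\pi^{(0,T)} = \pi_0$, so its \emph{conditional} weight given $Y=0$ is $2\pi_0$. I would double-check this sign/weight assignment carefully, since getting it backwards flips the $(1-4\pi_0)$ factor. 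Once the weights are pinned down correctly, collecting terms gives the result immediately.
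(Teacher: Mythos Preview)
Your approach is the same as the paper's: expand $\mu^{(1)}$ and $\mu^{(0)}$ as domain-conditional averages using the priors from \cref{as:equal_priors}, then collect $\Delta_C$ and $\Delta_D$ terms. The paper does this slightly more directly (writing $\mu^{(1)} = 2\pi_0\Delta_C + \mu^{(1,T)}$ and $\mu^{(0)} = -2\pi_0\Delta_C + \mu^{(0,S)}$ without introducing a common reference mean), but the computations are equivalent.

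However, the conditional priors you display are swapped. Since $(1,S)$ and $(0,T)$ are the \emph{minority} groups with joint prior $\pi_0$, one has $\pi^{(S\mid 1)} = \pi^{(T\mid 0)} = 2\pi_0$ and $\pi^{(T\mid 1)} = \pi^{(S\mid 0)} = 1-2\pi_0$, i.e.\ exactly the opposite of what you wrote. Consequently the correct expansions are
\[
\mu^{(1)} = 2\pi_0\,\mu^{(1,S)} + (1-2\pi_0)\,\mu^{(1,T)}, \qquad
\mu^{(0)} = (1-2\pi_0)\,\mu^{(0,S)} + 2\pi_0\,\mu^{(0,T)}.
\]
With your parallelogram substitutions this gives $\bar\Delta = \Delta_D - (1-4\pi_0)\Delta_C$ as desired; the versions you displayed would instead yield $\bar\Delta = \Delta_D + (1-4\pi_0)\Delta_C$, the sign flip you anticipated. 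Your closing paragraph actually lands on the right value ($2\pi_0$ for $\mu^{(0,T)}$) despite the contradictory intermediate sentence, so just correct the displayed equations accordingly and the argument goes through.
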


\begin{proof}
    We first note that \[
        \mu^{(1)} = 2\pi_0(\mu^{(1,S)}-\mu^{(1,T)}) + \mu^{(1,T)} = 2\pi_0 \Delta_C + \mu^{(1,T)}.
    \] Similarly, \[
        \mu^{(0)} = 2\pi_0(\mu^{(0,T)}-\mu^{(0,S)}) + \mu^{(0,S)} = -2\pi_0 \Delta_C + \mu^{(0,S)}.
    \]
    Combining these with the definitions of $\Delta_D$ and $\bar\Delta$, we get \begin{align*}
        \Delta_D - \bar\Delta &= \Delta_D - (\mu^{(1)} - \mu^{(0)}) \\ 
        &= \mu^{(1,T)} - \mu^{(0,T)} -2\pi_0 \Delta_C - \mu^{(1,T)} - 2\pi_0 \Delta_C + \mu^{(0,S)} \\
        &= \mu^{(0,S)} - \mu^{(0,T)} - 4\pi_0 \Delta_C\\
        &= (1-4\pi_0)\Delta_C.
    \end{align*}
\end{proof}
% \vspace{-20pt}
\noindent From \eqref{eq:erm-var}, \eqref{eq:erm-cov} and \cref{lemma:triangle}, we then obtain
\begin{equation}
    w^*_\text{SRM} = \frac{1}{4}\left(\Sigma + 2\pi_0(1-2\pi_0)\Delta_C\Delta_C^T + \frac{1}{4}\bar\Delta\bar\Delta^T \right)^{-1}\bar\Delta,
    \label{eq:opt-w-srm}
\end{equation}
where $\bar\Delta \coloneqq \mu^{(1)}-\mu^{(0)} = \Delta_D - (1-4\pi_0)\Delta_C$, and
\begin{equation}
    b^*_\text{SRM} = \frac{1}{2} - \frac{1}{2}(w^*_\text{SRM})^T(\mu^{(0,T)}+\mu^{(1,S)}).
\end{equation}
When $\pi_0 = 1/4$, $\bar\Delta=\Delta_D$, and therefore,
\begin{equation}
    w^*_\text{DS} = \frac{1}{4}\left(\Sigma + \frac{1}{4}\Delta_C\Delta_C^T + \frac{1}{4}\Delta_D\Delta_D^T \right)^{-1}\Delta_D \quad \text{and} \quad b^*_\text{DS} = \frac{1}{2} - \frac{1}{2}(w^*_\text{DS})^T\left(\mu^{(0,T)}+\mu^{(1,S)}\right).
\end{equation}
% Thus,
% \begin{align}
%     \text{WGE}(f_{\theta_\text{DS}^*}) = \max\left\{\Phi\left(\frac{-\frac{1}{2}(w_\text{DS}^*)^T\left(\Delta_C+\Delta_D\right)}{\sqrt{(w_\text{DS}^*)^T\Sigma w_\text{DS}^*}} \right), \Phi\left(\frac{\frac{1}{2}(w_\text{DS}^*)^T\left(\Delta_C-\Delta_D\right)}{\sqrt{(w_\text{DS}^*)^T\Sigma w_\text{DS}^*}} \right) \right\}.
% \end{align}

Since $w^*_{DS}=w^*_{UW}$, we have that $\text{WGE}(f_{\theta_\text{DS}^*}) = \text{WGE}(f_{\theta_\text{UW}^*})$.

% We define $\theta^{(d)}\triangleq \mu^{(0,d)} - \mu$ where $\mu$ is the mean of all of the data. Note that $\mu = \frac{\mu^{(0)} + \mu^{(1)}}{2}$ by \cref{as:equal_priors}. 

% When the priors on each group are equal ($\pi_0=\frac{1}{4}$), \cref{prop:triangle} does not allow us to relate $\Delta_C, \Delta_D$, and $\bar\Delta$. Instead we note that in this setting $\Delta_D=\bar\Delta$. This allows us to rewrite $\theta^{(R)}$ in terms of our two degrees of freedom ($\Delta_C$ and $\Delta_D$). 
% \begin{align}
%     \theta^{(R)} &= 
% \end{align}

\subsection{Proof of \cref{thm:wge-comparison}}
\label{appendix:wge-comparison}
% In this section we find the classifying weights $b$ that are found by several different algorithms. We follow the techniques of \cite{yaoImprovingOutofDistributionRobustness2022} but tighten their results and extend them to DS. % \cite{kirichenko2022last}.

In the following, we will be using the following lemma.
\begin{lemma}
    Let $A \in \mathbb R^{p\times p}$ be symmetric positive definite (SPD) and $u,v \in \mathbb R^p$. Then
    \[(A + vv^T + uu^T)^{-1}u = c_u\left( A^{-1}u - c_v A^{-1}v\right) \quad \text{with} \quad c_u \coloneqq \frac{1}{1+u^TB^{-1}u} \quad \text{and} \quad c_v \coloneqq\frac{v^TA^{-1}u}{1+v^TA^{-1}v}.\]
\label{lemma:sherman-morrison-deriv}
\end{lemma}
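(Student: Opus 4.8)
The plan is to apply the Sherman--Morrison rank-one update formula twice, peeling off the two rank-one terms $uu^T$ and $vv^T$ one at a time. First I would set $B \coloneqq A + vv^T$; since $A$ is SPD and $vv^T$ is positive semidefinite, $B$ is SPD, hence invertible, and so is $A + vv^T + uu^T = B + uu^T$, so all inverses below are well defined and all the scalar denominators that appear are strictly positive.

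The first step is to handle the outer update: applying Sherman--Morrison to $(B + uu^T)^{-1}$ and hitting it with $u$, the rank-one correction turns out to be a scalar multiple of $B^{-1}u$ itself, so the expression collapses to $(B+uu^T)^{-1}u = \tfrac{1}{1+u^TB^{-1}u}\,B^{-1}u = c_u B^{-1}u$. The second step is to expand $B^{-1}u = (A+vv^T)^{-1}u$ by Sherman--Morrison again, which gives $B^{-1}u = A^{-1}u - \tfrac{v^TA^{-1}u}{1+v^TA^{-1}v}\,A^{-1}v = A^{-1}u - c_v A^{-1}v$. Substituting the second identity into the first yields exactly $(A+vv^T+uu^T)^{-1}u = c_u(A^{-1}u - c_v A^{-1}v)$, as claimed (with the understanding that $B = A+vv^T$ in the definition of $c_u$).

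There is no genuine obstacle here; the content is entirely bookkeeping. The only points worth checking are (i) that every matrix being inverted is nonsingular---immediate from the SPD hypothesis, which is preserved under adding positive semidefinite rank-one terms---and (ii) that the denominators $1+u^TB^{-1}u$ and $1+v^TA^{-1}v$ do not vanish, which again follows because $B^{-1}$ and $A^{-1}$ are SPD (so these quantities are in fact $\ge 1$). If one prefers a statement referring only to $A$, the factor $u^TB^{-1}u$ in $c_u$ can be further expanded via the same identity $B^{-1}u = A^{-1}u - c_v A^{-1}v$, but the compact two-stage form above is all that is needed for the proof of \cref{thm:wge-comparison}.
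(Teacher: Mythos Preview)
Your proposal is correct and follows essentially the same two-step Sherman--Morrison argument as the paper: define $B = A + vv^T$, collapse $(B+uu^T)^{-1}u$ to $c_u B^{-1}u$, and then expand $B^{-1}u$ via a second Sherman--Morrison application to obtain $A^{-1}u - c_v A^{-1}v$. Your additional care in checking positivity of the denominators is welcome but not strictly needed beyond what the paper's one-line SPD remark already covers.
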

\begin{proof}
    Let $B \coloneqq A + vv^T$. Then
\begin{align*}
    (A + vv^T + uu^T)^{-1}u & = (B+uu^T)^{-1}u \\
    & = \left(B^{-1}- \frac{B^{-1}uu^TB^{-1}}{1+u^TB^{-1}u}\right)u \qquad \text{(Sherman-Morrison formula)} \\
    & = B^{-1}u- \frac{u^TB^{-1}u}{1+u^TB^{-1}u}B^{-1}u \\
    & = c_uB^{-1}u \\
    & = c_u\left( A^{-1} - \frac{A^{-1}vv^TA^{-1}}{1+v^TA^{-1}v}\right)u \qquad \text{(Sherman-Morrison formula)}.
\end{align*}
The assumption that $A$ is SPD guarantees that $A^{-1}$ exists, $B$ is SPD, and $c_u$ and $c_v$ are well-defined.
% \vspace{-20pt}
\end{proof}
\subsubsection{SRM}
From \eqref{eq:opt-w-srm}, we have that 
\begin{equation}
    w^*_\text{SRM} = \frac{1}{4}\left(\Sigma + 2\pi_0(1-2\pi_0)\Delta_C\Delta_C^T + \frac{1}{4}\bar\Delta\bar\Delta^T \right)^{-1}\bar\Delta.
    \label{eq:w-srm-thm2}
\end{equation}
Applying \cref{lemma:sherman-morrison-deriv} to \eqref{eq:w-srm-thm2} with $A = \Sigma$, $u = \bar\Delta/2$ and $v = \sqrt{\beta} \Delta_C$, where $\beta \coloneqq 2\pi_0(1-2\pi_0)$, yields
% Let $A_\text{SRM} \coloneqq \Sigma + \beta\Delta_C\Delta_C^T$ with $\beta \coloneqq 2\pi_0(1-2\pi_0)$. Also let $\delta \coloneqq 1 - 4\pi_0$.
%$c_{\pi_0} \coloneqq \left[{1-4\pi_0+2\pi_0(1-2\pi_0)\Delta_C^T\Sigma^{-1}\Delta_D}\right]/\left[{1+2\pi_0(1-2\pi_0)\Delta_C^T\Sigma^{-1}\Delta_C}\right]$ 
% Then
\begin{align*}
    % w^*_\text{SRM} &=  \frac{1}{4}\left(A_\text{SRM}^{-1} - \frac{\frac{1}{4}A_\text{SRM}^{-1}\bar\Delta\bar\Delta^TA_\text{SRM}^{-1}}{1+\frac{1}{4}\bar\Delta^TA_\text{SRM}^{-1}\bar\Delta} \right)\bar\Delta \qquad \text{(Sherman-Morrison formula)} \\
    % & = \frac{1}{4}\left(A_\text{SRM}^{-1}\bar\Delta - \frac{\frac{1}{4}\bar\Delta^TA_\text{SRM}^{-1}\bar\Delta}{1+\frac{1}{4}\bar\Delta^TA_\text{SRM}^{-1}\bar\Delta} A_\text{SRM}^{-1}\bar\Delta\right) \\
    % & = \frac{1}{4}\left(1- \frac{\frac{1}{4}\bar\Delta^TA_\text{SRM}^{-1}\bar\Delta}{1+\frac{1}{4}\bar\Delta^TA_\text{SRM}^{-1}\bar\Delta}\right)A_\text{SRM}^{-1}\bar\Delta \\
    % & = \gamma A_\text{SRM}^{-1}\bar\Delta \\
    % & = \gamma\left(\Sigma^{-1}-\frac{\beta\Sigma^{-1}\Delta_C\Delta_C^T\Sigma^{-1}}{1+\beta\Delta_C^T\Sigma^{-1}\Delta_C} \right)\bar\Delta \qquad \text{(Sherman-Morrison formula)} \\
    w^*_\text{SRM} & = \gamma_\text{SRM}\left(\Sigma^{-1}\bar\Delta-\frac{\beta\Delta_C^T\Sigma^{-1}\bar\Delta}{1+\beta\Delta_C^T\Sigma^{-1}\Delta_C}\Sigma^{-1}\Delta_C \right) \\
    & = \gamma_\text{SRM}\left(\Sigma^{-1}\Delta_D- \frac{1-4\pi_0+\beta\Delta_C^T\Sigma^{-1}\Delta_D}{1+\beta\Delta_C^T\Sigma^{-1}\Delta_C}\Sigma^{-1}\Delta_C \right) \qquad \text{(substituting $\bar\Delta = \Delta_D - (1-4\pi_0)\Delta_C$)} \\
    & = \gamma_\text{SRM}\left(\Sigma^{-1}\Delta_D-c_{\pi_0}\Sigma^{-1}\Delta_C \right)
\end{align*}
with
\[\gamma_\text{SRM} \coloneqq \frac{1}{4+\bar\Delta^TA_\text{SRM}^{-1}\bar\Delta} \quad \text{and} \quad c_{\pi_0} \coloneqq \frac{1-4\pi_0+\beta\Delta_C^T\Sigma^{-1}\Delta_D}{1+\beta\Delta_C^T\Sigma^{-1}\Delta_C}.\] 
Let $\|v \| \coloneqq \sqrt{v^T\Sigma^{-1}v}$ be the norm induced by the $\Sigma^{-1}$--inner product.
Then
\begin{align*}
    \text{WGE}(f_{\theta_\text{SRM}^*}) = \max\left\{\Phi\left(\frac{(c_{\pi_0}-1)\Delta_C^T\Sigma^{-1}\Delta_D - \|\Delta_D \|^2 + c_{\pi_0}\|\Delta_C \|^2}{2\|{\Delta_D - c_{\pi_0} \Delta_C}\|} \right), \Phi\left(\frac{(c_{\pi_0}+1)\Delta_C^T\Sigma^{-1}\Delta_D - \|\Delta_D \|^2 - c_{\pi_0}\|\Delta_C \|^2}{2\|\Delta_D - c_{\pi_0} \Delta_C\|} \right) \right\}
    % \label{eq:ERM_error}
\end{align*}
% The error function for ERM is therefore given by \begin{align}
%     \label{eq:ERM_error}
%     E_{\pi}^{(wst)} &= \max \{ \\
%         &\Phi(\frac{1}{2} \frac{(1+c_{\pi_0})\xi \snorm{\Delta_D} \snorm{\Delta_C}- c_{\pi_0} \snorm{ \Delta_C}^2 - \snorm{\Delta_D}^2}{\snorm{\Delta_D - c_{\pi_0} \Delta_C}}), \\
%         &\Phi(\frac{1}{2} \frac{(c_{\pi_0}-1)\xi \snorm{\Delta_D} \snorm{\Delta_C}+ c_{\pi_0} \snorm{ \Delta_C}^2 - \snorm{\Delta_D}^2}{\snorm{\Delta_D - c_{\pi_0} \Delta_C}})
% \end{align}
Under \cref{as:orthogonality}, $c_{\pi_0} = \Tilde{c}_{\pi_0} \coloneqq ({1-4\pi_0})/({1+2\pi_0(1-2\pi_0)\|\Delta_C\|^2})$ and
\begin{align}
    \text{WGE}(f_{\theta_\text{SRM}^*}) = \max\left\{\Phi\left(\frac{- \|\Delta_D \|^2 + \Tilde{c}_{\pi_0}\|\Delta_C \|^2}{2\sqrt{\|{\Delta_D}\|^2 + \Tilde{c}_{\pi_0}^2 \|{\Delta_C}\|^2}} \right), \Phi\left(\frac{ - \|\Delta_D \|^2 - \Tilde{c}_{\pi_0}\|\Delta_C \|^2}{2\sqrt{\|{\Delta_D}\|^2 + \Tilde{c}_{\pi_0}^2 \|{\Delta_C}\|^2}} \right) \right\},
    \label{eq:ERM_error}
\end{align}
where the first term is the error of the minority groups and the second is that of the majority groups.
In order to compare the WGE of SRM with the WGEs of the other methods, we first show that under \cref{as:orthogonality} the WGE of SRM is given by the minority error term in \eqref{eq:ERM_error}. Since $\Tilde{c}_{\pi_0} \ge 0$ for $\pi_0 \le 1/4$ with equality at $\pi_0=1/4$ , we have that
\begin{align*}
    \frac{- \|\Delta_D \|^2 + \Tilde{c}_{\pi_0}\|\Delta_C \|^2}{2\sqrt{\snorm{\Delta_D}^2 + \Tilde{c}_{\pi_0}^2 \snorm{\Delta_C}^2}} \ge \frac{ - \|\Delta_D \|^2 - \Tilde{c}_{\pi_0}\|\Delta_C \|^2}{2\sqrt{\snorm{\Delta_D}^2 + \Tilde{c}_{\pi_0}^2 \snorm{\Delta_C}^2}}
    \quad \Leftrightarrow \quad\Tilde{c}_{\pi_0}\snorm{\Delta_C}^2 \ge 0,
\end{align*}
which is satisfied for all $\pi_0 \le 1/4$ with equality at $\pi_0 = 1/4$. Since $\Phi$ is increasing, we get
\begin{align}
    \text{WGE}(f_{\theta_\text{SRM}^*}) = \Phi\left(\frac{- \|\Delta_D \|^2 + \Tilde{c}_{\pi_0}\|\Delta_C \|^2}{2\sqrt{\snorm{\Delta_D}^2 + \Tilde{c}_{\pi_0}^2 \snorm{\Delta_C}^2}} \right).
    \label{eq:ERM_error_orth}
\end{align}
\subsubsection{Downsampling}

Since DS is a special case of SRM with $\pi_0=1/4$, we only need to examine \eqref{eq:ERM_error_orth} as a function of $\pi_0$. Still under \cref{as:orthogonality}, we take the following derivative:
\begin{align*}
    \frac{\partial}{\partial \pi_0} \frac{- \|\Delta_D \|^2 + \Tilde{c}_{\pi_0}\|\Delta_C \|^2}{2\sqrt{\snorm{\Delta_D}^2 + \Tilde{c}_{\pi_0}^2 \snorm{\Delta_C}^2}} & = \frac{\snorm{\Delta_D}^2\snorm{\Delta_C}^2(\Tilde{c}_{\pi_0}+1)}{2(\snorm{\Delta_D}^2+\Tilde{c}_{\pi_0}^2 \snorm{\Delta_C}^2)^{3/2}}*\frac{-2(16\pi_0^2-8\pi_0+3)\snorm{\Delta_C}}{(1+2\pi_0(1-2\pi_0)\snorm{\Delta_C}^2)^2},
\end{align*}
    % \begin{align}
    %     \frac{\partial}{\partial \pi_0} \frac{- \|\Delta_D \|^2 + \Tilde{c}_{\pi_0}\|\Delta_C \|^2}{2(\snorm{\Delta_D} + \Tilde{c}_{\pi_0} \snorm{\Delta_C})} &= 
    %     \frac{\partial}{\partial \pi} \frac{1}{2}\frac{\frac{1-4\pi}{1+a_0\Vert \Delta_C \Vert_\Sigma^2}\Vert \Delta_C \Vert_\Sigma^2 - \Vert \Delta_D \Vert_\Sigma^2}{\sqrt{ \snorm{\Delta_D}^2 -\frac{1-4\pi}{1+a_0 \Vert \Delta_C \Vert_\Sigma^2} \Vert \Delta_C \Vert_\Sigma^2}} \\
    %     &= -\frac{2 (1+a_0 \Vert \Delta_C \Vert_\Sigma^2)\Vert \Delta_D \Vert_\Sigma^2 \Vert \Delta_C \Vert^2_\Sigma(2+a_0\Vert \Delta_C \Vert_\Sigma^2-4\pi)}{(\Vert \Delta_D\Vert^2_\Sigma (1+a_0 \Vert \Delta_C\Vert_\Sigma^2)^2 + \Vert \Delta_C \Vert_\Sigma^2 (1-4\pi)^2)^{3/2}}
    % \end{align}
    % \end{proof}
    which is strictly negative for $\pi_0 \le 1/4$. Thus, the WGE of SRM is strictly decreasing as a function of $\pi_0$, i.e., for $\pi_0 < 1/4$,
    \begin{align}
    \text{WGE}(f_{\theta_\text{SRM}^*}) > \text{WGE}(f_{\theta_\text{DS}^*}) = \Phi\left(-\|\Delta_D \|/2  \right).
    \label{eq:ERM_DS_comp-1}
    \end{align}
\subsubsection{Upweighting}
From the proof of \cref{prop:ds-uw-weights-eq} (\cref{appendix:ds-uw-weights-eq}), we have that
\begin{align}
     \text{WGE}(f_{\theta_\text{UW}^*}) = \text{WGE}(f_{\theta_\text{DS}^*}) = \Phi\left(-\|\Delta_D \|/2  \right).
    \label{eq:ERM_DS_comp}
\end{align}
\subsubsection{Intra-class domain mixup}
From \cite{yaoImprovingOutofDistributionRobustness2022}, we have
\begin{equation}
    w_\text{MU}^* = \frac{1}{4}\left(2\mathbb E[\Lambda^2]\Sigma + \text{Var}(\Lambda)\Delta_C\Delta_C^T + \frac{1}{4}\Delta_D\Delta_D^T \right)^{-1}\Delta_D.
\end{equation}
Applying \cref{lemma:sherman-morrison-deriv} to \eqref{eq:w-srm-thm2} with $A = 2\mathbb E[\Lambda^2]\Sigma$, $u = \Delta_D/2$ and $v = \sqrt{\text{Var}(\Lambda)} \Delta_C$ yields
% Let $A_\text{MU} = 2\mathbb E[\Lambda^2]\Sigma + \text{Var}(\Lambda)\Delta_C\Delta_C^T$. Following similar analysis to that for SRM, we obtain
\begin{align*}
    w^*_\text{MU} =\gamma_\text{MU}\left(\Sigma^{-1}\Delta_D-c_\text{MU}\Sigma^{-1}\Delta_C \right) \text{ with }  \gamma_\text{MU} \coloneqq \frac{1}{2\mathbb E[\Lambda^2]}\left( \frac{1}{4+\Delta_D^TA_\text{MU}^{-1}\Delta_D}\right) \text{ and } c_\text{MU} \coloneqq \frac{\text{Var}(\Lambda)\Delta_C^T\Sigma^{-1}\Delta_D}{2\mathbb E[\Lambda^2]+\text{Var}(\Lambda)\Delta_C^T\Sigma^{-1}\Delta_C},
\end{align*}
and
\begin{align*}
    b^*_\text{MU} &=\frac{1}{2}-(w^*_\text{MU})^T\left[(\Delta_D+\Delta_C)/2+\mu^{(0,T)} \right] = \frac{1}{2}-(w^*_\text{MU})^T(\mu^{(0,T)} + \mu^{(1,S)}).
\end{align*}
Under \cref{as:orthogonality}, $c_\text{MU} = 0$, so $w^*_\text{MU} =\gamma_\text{MU}\left(\Sigma^{-1}\Delta_D\right)$,
and therefore,
\begin{align}
         \text{WGE}(f_{\theta_\text{MU}^*}) = \Phi\left(-\|\Delta_D \|/2  \right).
\end{align}
% Under \cref{as:orthogonality},
% \begin{align*}
%     w^*_\text{MU} &=\gamma_\text{MU}\left(\Sigma^{-1}\Delta_D\right),
% \end{align*}
% and therefore,
% \begin{align}
%     \text{WGE}(f_{\theta_\text{MU}^*}) = \Phi\left(-\|\Delta_D \|/2  \right).
% \end{align}
% \clearpage
\fi
%\balance
\end{document}